\theoremstyle{plain}
\newtheorem{theorem}{Theorem}
\newtheorem{lemma}[theorem]{Lemma}
\theoremstyle{definition}
\newtheorem{assumption}[theorem]{Assumption}
\theoremstyle{remark}
\newcommand{\ours}[0]{{SiLRI}\xspace}
\title{\LARGE \bf
Real-world Reinforcement Learning from Suboptimal Interventions
}
\begin{document}

\author{
Yinuo Zhao$^{1,2}$,
Huiqian Jin$^{1,3}$,
Lechun Jiang$^{1,3}$,
Xinyi Zhang$^{1,4}$,
Kun Wu$^{1}$,
Pei Ren$^{1}$\\
Zhiyuan Xu$^{1\dagger}$,
Zhengping Che$^{1\dagger}$,
Lei Sun$^{3}$,
Dapeng Wu$^{2}$,
Chi Harold Liu$^{4}$,
Jian Tang$^{1}$\textsuperscript{\Letter}\\
{\small
$^{1}$Beijing Innovation Center of Humanoid Robotics,
$^{2}$City University of Hong Kong}\\
{\small$^{3}$Nankai University,
$^{4}$Beijing Institute of Technology}\\
}

\maketitle
\begingroup
\renewcommand{\thefootnote}{}
\footnotetext{\Letter~ Corresponding author: Jian Tang (jian.tang@x-humanoid.com). $\dagger$~ Project leaders: Zhengping Che and Zhiyuan Xu. This work is done during Yinuo Zhao's internship at Beijing Innovation Center of Humanoid Robotics.}
\endgroup
\thispagestyle{empty}
\pagestyle{empty}

\begin{abstract}
Real-world reinforcement learning (RL) offers a promising approach to training precise and dexterous robotic manipulation policies in an online manner, enabling robots to learn from their own experience while gradually reducing human labor. 
However, prior real-world RL methods often assume that human interventions are optimal across the entire state space, overlooking the fact that even expert operators cannot consistently provide optimal actions in all states or completely avoid mistakes. Indiscriminately mixing intervention data with robot-collected data inherits the sample inefficiency of RL, while purely imitating intervention data can ultimately degrade the final performance achievable by RL.
The question of \textit{how to leverage potentially suboptimal and noisy human interventions to accelerate learning without being constrained by them} thus remains open. 
To address this challenge, we propose \textbf{\ours}, a state-wise Lagrangian reinforcement learning algorithm for real-world robot manipulation tasks. Specifically, we formulate the online manipulation problem as a constrained RL optimization, where the constraint bound at each state is determined by the uncertainty of human interventions. We then introduce a state-wise Lagrange multiplier and solve the problem via a min-max optimization, jointly optimizing the policy and the Lagrange multiplier to reach a saddle point.
Built upon a human-as-copilot teleoperation system, our algorithm is evaluated through real-world experiments on diverse manipulation tasks. Experimental results show that \ours effectively exploits human suboptimal interventions, reducing the time required to reach a 90\% success rate by at least 50\% compared with the state-of-the-art RL method HIL-SERL, and achieving a 100\% success rate on long-horizon manipulation tasks where other RL methods struggle to succeed. Project website:~\href{https://silri-rl.github.io/}{https://silri-rl.github.io/}

\end{abstract}

\section{Introduction}

Vision-based robotic manipulation has recently achieved remarkable progress in real-world applications. Either Transformer-based methods (e.g., ACT~\cite{zhao2023learning}) or diffusion-based approaches (e.g., DP~\cite{chi2023diffusion}) achieves strong performance by leveraging expert demonstrations collected through teleoperation~\cite{xu2025hacts,wu2025robocopilot,wu2024gello}. Despite their success, Imitation Learning (IL) paradigms remain constrained by their heavy dependence on large amounts of high-quality demonstrations, which require substantial human effort. In contrast, reinforcement learning (RL) offers a more autonomous alternative by enabling robots to explore and optimize policies in their own way, thereby reducing dependence on human supervision. However, unlike simulation-based RL, where agents can explore freely and at low cost, practical deployment in complex real-world scenarios is still limited by RL’s poor sample efficiency and safety concerns.

\begin{figure}[tb]
    \centering
    \subfloat{\includegraphics[width=1.0\columnwidth]{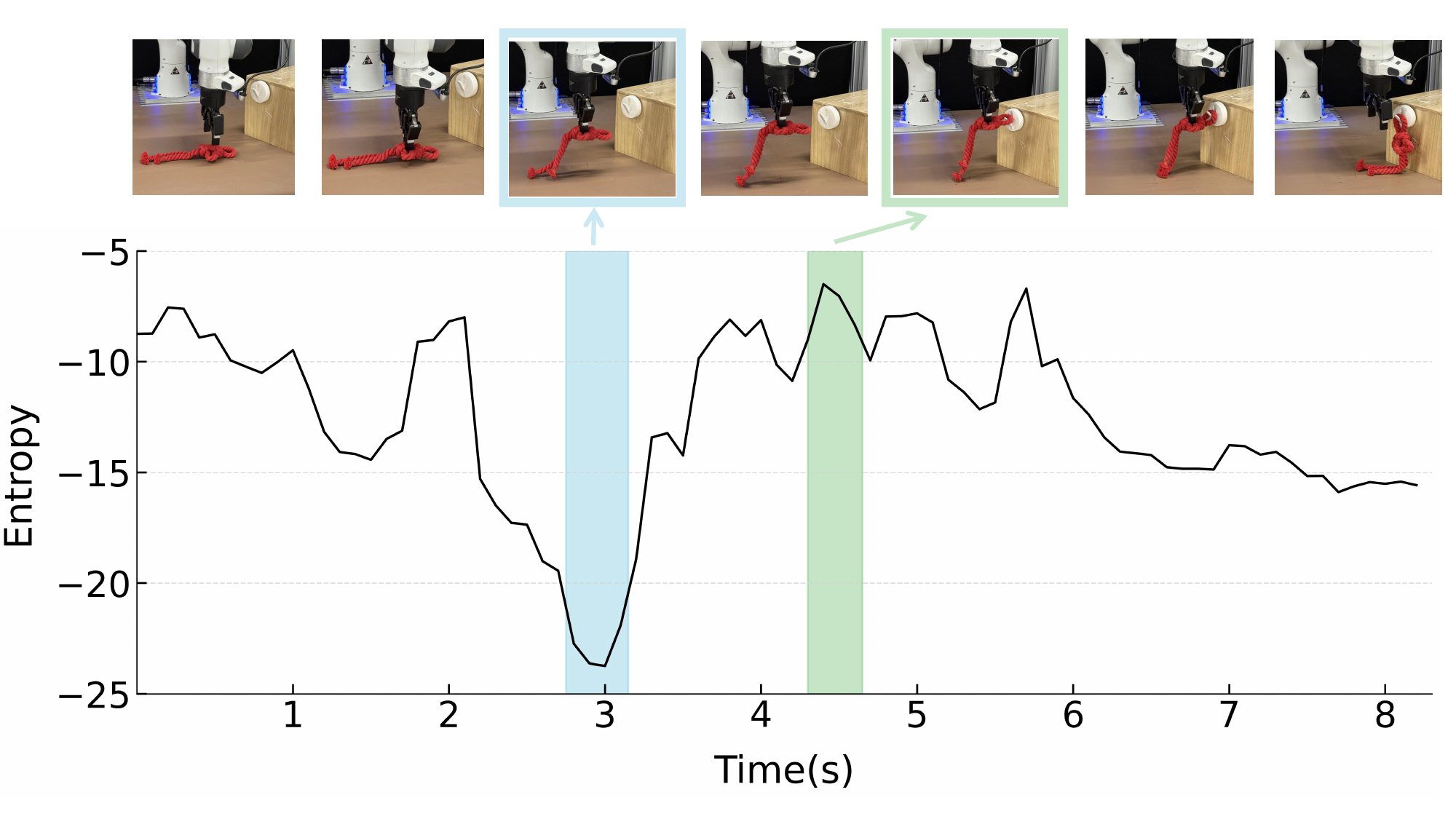}}
    \caption{\textbf{Entropy of Human Interventions Across States.} Entropy is estimated with a multivariate normal distribution model. In low-entropy states (blue region), human operators intervene consistently and confidently, whereas in high-entropy states (green region), their interventions are inconsistent, indicating uncertainty.}
    \label{fig:entropy}
\end{figure}

\begin{figure*}[tb]
    \centering
    \subfloat{\includegraphics[width=1.0\textwidth]{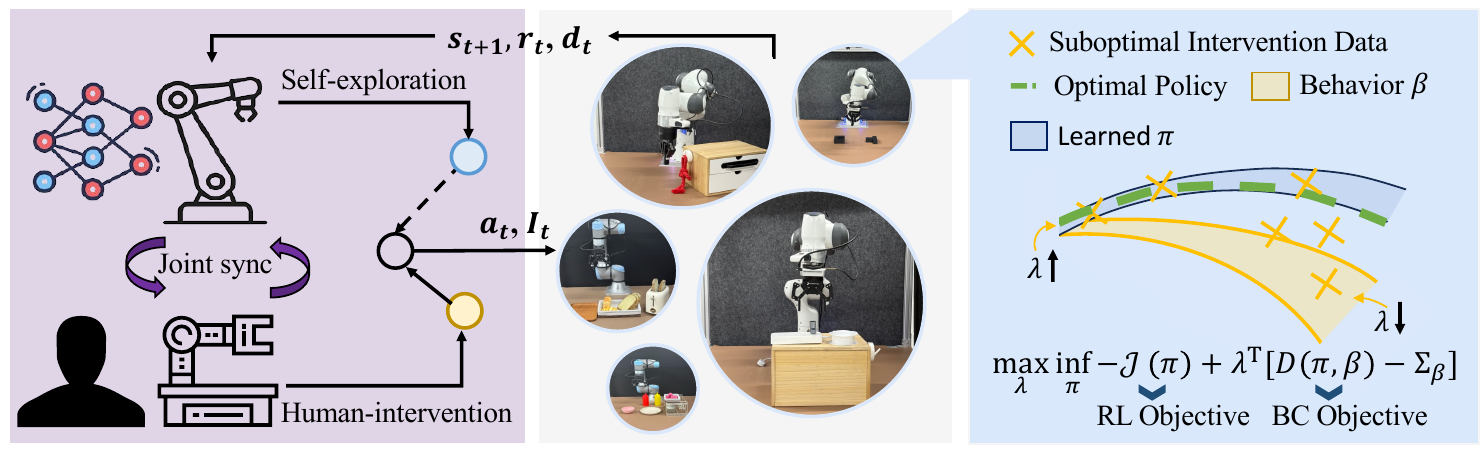}}
    \caption{\textbf{\ours Enables Effective Real-world RL from Suboptimal Interventions.} Left: A human-as-copilot teleoperation system that enables seamless human intervention. Right: The overall optimization objective of \ours. Relaxing the constraint in high-entropy states enables the learned policy $\pi$ to converge to a policy that outperforms human behavior policy $\beta$.}
    \label{fig:overview}
\end{figure*}

Human-in-the-loop Reinforcement Learning (HIL-RL)~\cite{luo2025precise,chen2025conrft} has recently emerged as a promising paradigm for tackling complex real-world robot manipulation tasks, achieving near human-level performance with highly dexterous skills. HIL-RL typically leverages three types of data: online policy transitions, online human interventions, and offline human demonstrations. For clarity, we group the first two as online data and the latter two as intervention data. In practice, due to the high-dimensional exploration space and sparse reward signals in real-world environments, existing HIL-RL methods typically store online and intervention data in separate buffers (with online human interventions duplicated) and sample from them uniformly during RL updates. This strategy increases the sampling probability of scarce yet informative successful transitions from intervention data, which is a key factor underlying their current success. However, intervention data are often integrated into the RL pipeline without careful consideration of their underlying properties. We illustrate this issue with the following example.

As shown in Fig.~\ref{fig:entropy}, we plot the policy entropy along a human-collected trajectory in the Hang Chinese Knot task. The entropy is estimated using a multivariate normal distribution fitted to 20 trajectories of human demonstrations. We observe that in low-entropy states, such as when picking up the Chinese knot, humans tend to provide consistent actions (i.e., similar actions under the same state). In contrast, in states that require high precision, humans fail to produce consistent actions, leading to high entropy. This suboptimality reveals a fundamental challenge: different states require different balances between the RL and IL objectives. In low-entropy states, imitation is effective, but in high-entropy states, suboptimal interventions can hinder performance, making the RL objective a more reliable guide.

To address this challenge, we propose \textbf{\ours}, a \underline{S}tate-w\underline{i}se \underline{L}agrangian \underline{R}einforcement learning framework from suboptimal \underline{I}nterventions for real-world robot manipulation training. Specifically, we first formulate a constrained RL optimization problem, where the constraint bound at each state is determined by the corresponding human uncertainty. As shown in the right part of Fig.~\ref{fig:overview}, in states where humans provide consistent data (low entropy), the learned policy is constrained to stay close to the human behavior policy through increasing Lagrange multiplier $\lambda$, to improve training efficiency. In high-entropy states, the constraint is relaxed, encouraging the policy to be optimized primarily through its own estimated critic, i.e., RL objective. 
To realize this behavior, we introduce a state-wise Lagrange multiplier in the imitation term and formulate the problem as a min-max optimization. The policy objective is minimized while the dual objective over the Lagrange multiplier is maximized, enabling the state-wise multiplier to be adaptively adjusted to balance the RL and IL objectives. This design improves RL sample efficiency while ensuring convergence to a policy that can surpass suboptimal intervention data.

Building \ours on a human-as-copilot teleoperation system, as shown in the left part of Fig.~\ref{fig:overview}, we conduct extensive real-world experiments on two robotic embodiments. Comparisons against a classical online IL method and state-of-the-art (SOTA) real-world RL methods demonstrate the effectiveness of \ours.

We summarize our contributions as follows:
\begin{itemize}
    \item We formulate a constrained RL problem that explicitly accounts for human uncertainty across states, enabling more effective use of suboptimal human interventions.
    \item To solve this constrained RL problem, we introduce learnable state-wise Lagrange multipliers and cast the optimization as a min-max problem, allowing the agent to adaptively trade off between RL and IL objectives.
    \item Built on a human-robot collaboration system, our real-world experiments demonstrate that \ours reaches a 90\% success rate at least 50\% faster (corresponding to an 18-minute reduction) than SOTA method HIL-SERL, while achieving a 100\% success rate on long-horizon manipulation tasks where other RL methods struggle.
\end{itemize}

\section{Related Works}
\paragraph*{Imitation Learning for Real-world Manipulations}
Imitation learning has dominated the end-to-end learning paradigm~\cite{hao2021matters,albanie2020end} for real-world manipulation for decades. With high-quality demonstrations collected via teleoperation systems (e.g., VR headsets~\cite{khazatsky2024droid}, homologous bilateral system~\cite{xu2025hacts,wu2025robocopilot}), it has served as a simple, efficient paradigm for vision-based robot manipulation. Recent methods, including ACT~\cite{zhao2023learning} and DP~\cite{chi2023diffusion}, further improve performance through more expressive models. More recently, Vision-Language-Action (VLA) models, such as Octo~\cite{team2024octo}, OpenVLA~\cite{kim2024openvla} and $\pi_0$~\cite{black2410pi0}, have shown promise for general manipulation task learning by leveraging large parameter counts and Internet-scale pretraining. However, their substantial computational cost makes online fine-tuning or even learning of complex, dexterous manipulation skills challenging. 
In this work, we focus on developing algorithms that can efficiently leverage suboptimal human interventions, while using simple model architectures mainly borrowed from prior work~\cite{luo2025precise}.

\paragraph*{Reinforcement Learning for Real-world Manipulations}
Recent works~\cite{liu2025can,lu2025vla,chen2025rlrc,julg2025refined} explore reinforcement learning algorithm to unlock the generalization ability of large models. Most approaches build on PPO~\cite{schulman2017proximal} with an OpenVLA~\cite{kim2024openvla} backbone and experiment on simulations. While promising, fully autonomous RL exploration without human intervention can be unsafe and inefficient for real-world training. For real-world robot manipulation tasks, RL methods are broadly divided into two categories based on how they incorporate human assistance. The first category directly uses human intervention actions. For example, ConRFT~\cite{chen2025conrft} leverages the human intervention pipeline from prior work HIL-SERL~\cite{luo2025precise}, combining behavior cloning and Q-learning in a consistency policy for both offline and online training. iRe-VLA~\cite{guo2025improving} alternates between supervised learning (SL) and reinforcement learning (RL) for stable fine-tuning. In the SL stage, the VLM backbone and action head are trained on successful trajectories using LoRA~\cite{hu2022lora}. In the RL stage, only lightweight action and critic heads are fine-tuned for efficiency. These works typically treat human interventions as optimal guidance and incorporate the corresponding data via an imitation loss term. The second category treats human intervention as preference signals. HAPO~\cite{xia2025robotic}, for instance, integrates intervention trajectories into an action preference optimization process within an RLHF framework, helping VLAs avoid failure actions and adapt corrective behaviors. Other methods, such as MILE~\cite{korkmaz2025mile} and RLIF~\cite{luorlif}, also use intervention signals rather than direct actions, but with small visuomotor backbones. These works remain limited to simulation environments or simple real-world tasks. In this paper, we address more complex and general robotic manipulation tasks, and our approach falls into the first category that directly leverages human intervention action but does not assume optimal human operation.

\paragraph*{Lagrange multipliers}
Lagrange multipliers have been widely adopted in RL, but most methods do not explicitly optimize their exact values, relying instead on heuristic or fixed estimates. For example, in the classical PPO algorithm~\cite{schulman2017proximal}, a trust-region style constraint is imposed on the distance between the updated policy and the old policy to avoid policy collapse. Due to the unbounded nature of both the Lagrange multiplier and the KL divergence, PPO replaces explicit multiplier optimization with a clipped surrogate objective. In offline RL, constrained optimization is used to prevent bootstrapping on out-of-distribution actions by constraining the learned policy to remain close to the behavior policy that generated the dataset. In AWAC~\cite{nair2020awac}, for instance, the optimization objective is simplified to a forward KL form with a heuristic temperature (multiplier) parameter. More recently, several works directly optimize Lagrange multipliers. PPO-Lagrangian extends PPO to a constrained RL setting, enabling the algorithm to handle explicit constraints, while other methods~\cite{seo2025state} introduce state-dependent Lagrange multipliers so that the policy can account for state-wise safety constraints. Our work builds on these studies by introducing state-wise Lagrange multipliers into human-in-the-loop RL.

\section{Preliminaries}\label{sec:pre}
In this section, we first formulate the decision-making process as a Markov Decision Process (MDP). We then introduce a constrained optimization problem under a simplified constraint formulation, which more accurately captures the objective of online RL under human assistance. 

Following~\cite{luo2025precise}, we model the MDP in robotic manipulation as a tuple $(\mathcal{S}, \mathcal{A}, \mathcal{I}, \rho, \mathcal{P}, \mathcal{R}, \gamma)$. The state $\bm{s}\in\mathcal{S}$ comprises multiple RGB images together with the robot’s proprioceptive signals. The action $a\in\mathcal{A}$ (e.g., a desired end-effector 6D movement) is given either by a learned policy $\pi$ or by a human via a teleoperation interface. An intervention indicator $I(s,a)\in\mathcal{I}$ marks the action’s source, with $I(s,a)=1$ for human intervention and $I(s,a)=0$ for autonomous control. $\rho(\bm{s}_0)$ denotes the initial-state distribution; $\mathcal{P}$ denotes the environment’s state-transition dynamics; and the reward function $\mathcal{R}$ maps $\mathcal{S}\times\mathcal{A}$ to scalar rewards. Finally, $\gamma$ is the discount factor that trades off immediate and future returns, with larger $\gamma$ placing more emphasis on long-term returns.

In real-world RL, reward signals are typically sparse: the agent receives positive feedback only upon successful task completion. In this work, we follow prior approaches~\cite{luo2025precise,chen2025conrft} and adopt a sparse reward formulation. A reward of 10 is provided only upon successful task completion, while a penalty of -0.05 is applied at every other step. 

Previous RL methods, such as ConRFT~\cite{chen2025conrft} and HIL-SERL~\cite{luo2025precise}, simply treated the learning process as an typical unconstraint optimization problem and the objective $J(\pi)$ is defined as:
\begin{equation}\label{eqn:RL_objective}
    J(\pi) = \mathbb{E}_{\tau \sim \pi}\left[\sum_{t=0}^T\gamma^tr(\bm{s}_t, \bm{a}_t) \right].
\end{equation}

Although the above optimization objective is compatible with suboptimal human intervention data, it overlooks the potential optimality embedded in such data. In states where humans can provide reliable interventions, simple behavior cloning offers more effective guidance than a Q-network. Conversely, in other states, a well-learned Q-network can guide the policy beyond suboptimal human behavior, enabling performance that surpasses human capability. Identifying these states and automatically balancing the BC and RL objectives is therefore crucial for efficient real-world training. In the following section, we present a constrained optimization formulation and its solution to enable efficient real-world robot manipulation learning through more effective utilization of human interventions.

\section{Proposed Method: \ours}
In this section, we present \ours, \underline{S}tate-w\underline{i}se \underline{L}agrangian \underline{R}einforcement learning from suboptimal \underline{I}nterventions, for real-world robot manipulation training. We begin by formulating a constrained optimization problem and introducing how state-wise Lagrange multipliers are used to solve it. Next, we detail the loss functions and network architectures employed in \ours to enable efficient and stable learning. We then summarize the overall HIL training paradigm that supports interactive policy refinement. Finally, we discuss key design choices and parameter settings necessary for successful deployment in real-world training scenarios.

\subsection{Problem Formulation}
Recall the unconstrained objective in Eqn.~(\ref{eqn:RL_objective}). Directly optimizing the policy in the direction of increasing Q-values can be unreliable due to inaccurate value estimation, especially during the early stages of training. To address this challenge, we reformulate the objective as a constrained optimization problem as follows:
\begin{equation}
\label{eqn:original_objective}
\begin{aligned}
\max_{\pi} \quad & J(\pi) \\
\text{s.t.} \quad 
& \mathrm{D}\big(\pi(\cdot| \bm{s})\Vert\beta(\cdot|\bm{s})\big)
   \le \epsilon, \qquad \forall \bm{s},
\end{aligned}
\end{equation}
where $\pi$ denotes the deterministic robot policy, and $\beta$ represents the stochastic reference policy derived from human interventions. Although the optimal manipulation policy for the robot is assumed to be deterministic, human operators inherently exhibit variability and are unable to consistently reproduce continuous control actions. As a result, we model the human reference policy $\beta$ as stochastic. The constraint is designed to prevent the learned policy $\pi$ from deviating excessively from $\beta$, which could otherwise lead to unrecoverable or unsafe behaviors.

However, two key challenges arise when optimizing the above objective. First, measuring the distance between a deterministic policy $\pi$ and a stochastic policy $\beta$ is nontrivial. Second, even if we treat $\pi$ as a stochastic policy and evaluate the Kullback-Leibler (KL) divergence between these two, empirical optimization remains difficult due to the unbounded nature of the KL divergence. To address these issues, we reformulate the constraint as follows:
\begin{equation}
\label{eqn:objective}
\begin{aligned}
\max_{\pi} \quad & J(\pi) \\
\text{s.t.} \quad 
& \parallel \mu(\pi(\cdot|\bm{s})) - \mu(\beta(\cdot|\bm{s}))\parallel 
   \le \kappa \cdot \sigma_{\beta}(\bm{s}), \qquad \forall \bm{s},
\end{aligned}
\end{equation}
where $\mu$ denotes the output of $\pi$ and the mean of $\beta$, $\sigma_{\beta}$ is the standard deviation of policy $\beta$, and $\kappa$ is a constant value that adjust the tightness of the constraint. In Appendix~\ref{sec:proof}, we proof that under mild assumptions, the constraint condition in Eqn.~(\ref{eqn:original_objective}) can be simplified into the constraint condition in Eqn.~(\ref{eqn:objective}). For simplicity, in the following we omit the symbol $\mu$ and use $\pi$ and $\beta$ to denote the outputs (i.e., means) of the corresponding policies.

The constraint in Eqn.~(\ref{eqn:objective}) enforces that the optimized policy $\pi$ should remain close to the reference policy $\beta$ in states where humans provide consistent intervention data. In contrast, for states where human behavior is more uncertain (i.e., where the reference policy exhibits high variance), the policy $\pi$ is allowed to deviate and explore more freely, optimizing through RL under a relaxed constraint.

\begin{figure}
    \centering
    \includegraphics[width=0.8\columnwidth]{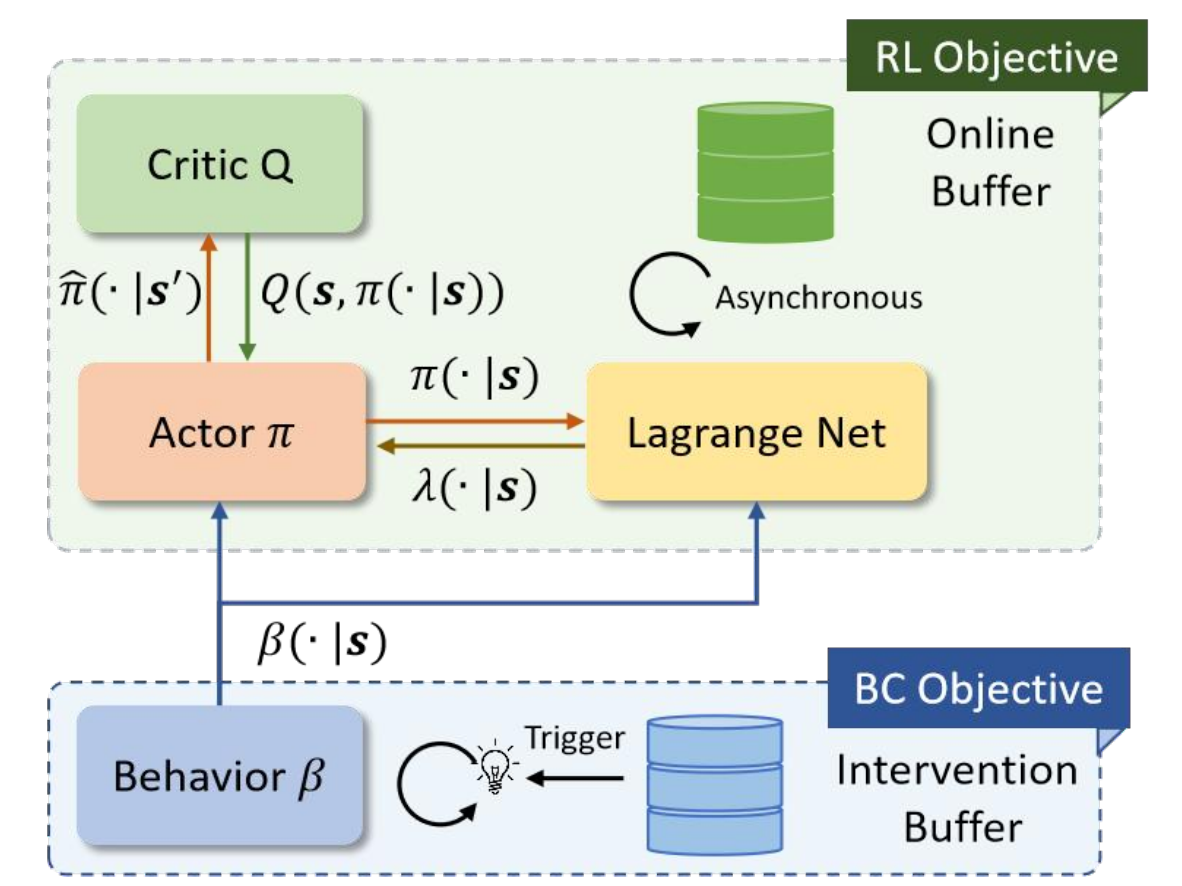}
    \caption{\textbf{Network Components in \ours.} The networks $Q$, $\pi$, and $\lambda$ are updated asynchronously during data collection using online buffer, while the network $\beta$ is updated periodically after a fixed number of new samples have been added to the intervention buffer.
    \label{fig:network}}
\end{figure}

\subsection{State-wise Lagrange Multipliers}
A common approach to solving constrained optimization problems is to introduce a Lagrange multiplier $\lambda \geq 0$ that penalizes constraint violations. For the optimization problem in Eqn.~(\ref{eqn:objective}), the corresponding Lagrangian is
\begin{equation}\label{eqn:dual_problem}
    \mathcal{L}(\pi, \lambda) = -J(\pi) + \lambda^{\top} \left[\mathrm{D}(\pi , \beta) - \kappa\Sigma_{\beta} \right],
\end{equation}
where $\lambda$ is a state-wise Lagrange multiplier, i.e., a nonnegative function over states rather than a scalar value. $\mathrm{D}(\pi , \beta)$ represents the mean deviation in Eqn.~(\ref{eqn:objective}) and $\Sigma_{\beta}$ denotes the standard deviation of $\beta$. The associated Lagrange dual problem can be written as
\begin{equation}
    \sup_{\lambda} \inf_{\pi} \mathcal{L}(\pi, \lambda),
    \quad \text{s.t.}~\lambda(\cdot) \geq 0.
\end{equation}
The objective is to find a saddle point $(\pi^*, \lambda^*)$ that satisfies
\begin{equation}
    \mathcal{L}(\pi, \lambda^*) \geq \mathcal{L}(\pi^*, \lambda^*) \geq \mathcal{L}(\pi^*, \lambda)
\end{equation}
for all admissible $\pi$ and $\lambda$. Specifically, for the policy $\pi$ and a fixed $\lambda$, the optimization problem becomes
\begin{equation}\label{eqn:optim_actor}
    \inf_{\pi} \left[ - J(\pi) + \lambda^{\top} \mathrm{D}(\pi, \beta) \right],
\end{equation}
since $\Sigma_{\beta}$ does not depend on $\pi$. Conversely, for the Lagrange multiplier $\lambda$ and a fixed $\pi$, the optimization problem is
\begin{equation}
    \sup_{\lambda} \lambda^{\top}\left[ \mathrm{D}(\pi, \beta) - \kappa \Sigma_{\beta}\right],
    \quad \text{s.t.}~\lambda(\cdot) \geq 0.
\end{equation}

Since finding the global saddle point is computationally intractable, we adopt gradient-based updates with function approximators. However, there are two challenges related to the behavior policy $\beta$ when solving the above objectives. First, the true policy distribution of the human operator is unavailable. Second, human intervention data is unavailable for states visited exclusively by the learned policy $\pi$. Therefore, we cannot directly measure the distance between $\pi$ and $\beta$ without sampling from the behavior policy. To address these challenges, we approximate $\beta$ with a multivariate Gaussian distribution fitted on the human intervention data. In the following, we describe the network architectures and loss functions used to approximate the optimal solution.

\subsection{Network Architectures and Loss Functions}
As shown in Fig.~\ref{fig:network}, there are four networks to approximate, they are the actor network $\pi$, behavior policy $\beta$, Lagrange network $\lambda$ and critic network $Q$. To improve training stability, we adopt a target actor network and a double $Q$-network architecture, following common practice in off-policy RL.

\paragraph*{Critic optimization}
We follow the standard Bellman update and optimize the critic $Q$ using a temporal-difference objective:
\begin{equation}
\begin{aligned}
\mathcal{L}(\theta^Q) 
    = \mathbb{E}_{\bm{s}, \bm{a}, \bm{s}'} \Bigl[
        \bigl( r & + \gamma \hat{Q}(\bm{s}', \hat{\pi}( \bm{s}')) \\
    &- \min_{k=1,2} Q_k(\bm{s}, \bm{a}; \theta^Q) \bigr)^2
    \Bigr].
\end{aligned}
\label{eqn:q_loss}
\end{equation}
where $\theta^Q$ denotes the parameters of the critic $Q$, and $\hat{\pi}$ and $\hat{Q}$ are the target policy and target critic, respectively. Both target networks are updated via Polyak averaging~\cite{van2016deep} from the current networks.

\begin{algorithm}[t!]
\caption{\ours~(Learner)} \label{algo:ours}
\begin{algorithmic}[1]
\footnotesize
\STATE Create empty intervention buffer $\mathcal{D}_I$ and Online RL buffer $\mathcal{D}_R$.
\STATE Fill $\mathcal{D}_I$ with small-scale human demonstrations.
\STATE Create RL agent with critic network $Q$, actor network $\pi$, Lagrange network $\lambda$ and behavior network $\beta$, with trainable parameters $\theta^Q$, $\theta^\pi$, $\theta^\lambda$, $\theta^\beta$, respectively. 
\STATE Train $\beta$ with data from $\mathcal{D}_I$ with Eqn.~(\ref{eqn:beta_loss}).
\STATE Initialize $n=0$
\WHILE{LISTEN}
    \IF{Receive transitions from Actor}
    \FOR{$t$ in 1 $\dots$ $T$}
    \STATE Insert transitions into $\mathcal{D}_R$ and insert transitions ($\bm{s}_t$, $\bm{a}_t$, $I_t=1$, $r_t$, $d_t$) into $D_I$.
    \STATE $n = n+1$.
    \IF{$n\bmod 50 = 0$}
        \FOR{ $k$ in 1 $\cdots$ 50}
        \STATE Minimize $\mathcal{L}(\theta^\beta)$ in Eqn.~(\ref{eqn:beta_loss}) with data from $\mathcal{D}_I$.
        \ENDFOR 
    \ENDIF 
    \ENDFOR
    \ENDIF
    \STATE Sample a minibatch of data from $\mathcal{D}_R$ and $\mathcal{D}_I$.
    \STATE Minimize $\mathcal{L}(\theta^Q)$ in Eqn.~(\ref{eqn:q_loss}).
    \STATE Minimize $\mathcal{L}(\theta^\pi)$ in Eqn.~(\ref{eqn:actor_loss}).
    \STATE Minimize $\mathcal{L}(\theta^\lambda)$ in Eqn.~(\ref{eqn:lag_loss}).
    \IF{Send Parameter Period}
    \STATE Send Parameters to the Actor.
    \ENDIF
\ENDWHILE
\label{algo}
\end{algorithmic}
\end{algorithm}

\paragraph*{Actor optimization} The actor network is modeled as a multivariate Gaussian policy whose mean is passed through a Tanh activation to constrain the actions to $(-1, 1)$. During inference, when we need to sample actions from the actor, we add a constant standard deviation of $0.05$ to encourage sufficient exploration and apply clipping to avoid out-of-range samples. During training, recalling the constrained optimization problem in Eqn.~(\ref{eqn:optim_actor}), the actor policy $\pi$ is optimized with a joint objective that combines reinforcement learning (RL) and behavior cloning (BC), balanced by the self-adapting coefficient $\lambda$. The actor loss is defined as
\begin{equation}
\begin{aligned}
\mathcal{L}(\theta^{\pi}) =
\mathbb{E}_{\bm{s}}
\frac{1}{\lambda(\bm{s})+1}
\Bigl[
&- Q(\bm{s}, \pi(\bm{s}; \theta^{\pi})) \\
&+ \lambda(\bm{s})
\bigl\lVert \pi(\bm{s};\theta^\pi) - \beta(\bm{s}) \bigr\rVert_2^2
\Bigr],
\end{aligned}
\label{eqn:actor_loss}
\end{equation}
where $\lVert\cdot\rVert$ denotes the Euclidean norm over the action space, and $\lambda (\bm{s})$ is the state-wise Lagrange multiplier. Following prior work~\cite{seo2025state}, the division $\lambda(\bm{s})+1$ acts as a regularizer to mitigate instability caused by large multiplier values. Note that when $\lambda(\bm{s}) \rightarrow \infty$, the RL term is suppressed and the loss is dominated by the BC term, effectively forcing the policy to follow the behavior policy. Conversely, when $\lambda(\bm{s}) \rightarrow 0$ and the constraint is satisfied, the RL term dominates, allowing the learned policy $\pi$ to improve beyond the behavior policy under the RL objective.

\paragraph*{Lagrange Multiplier Optimization}
The Lagrange multiplier network takes the current state as input and outputs a scalar value that modulates the weight of the constraint term, as defined in Eqn.~(\ref{eqn:dual_problem}). To ensure that the multiplier remains non-negative, a Softplus activation function is applied at the network's output layer. The loss function used to train the Lagrange multiplier is given by:
\begin{equation}
\label{eqn:lag_loss}
\mathcal{L}(\theta^{\lambda}) = \mathbb{E}{\bm{s}} \left[-\lambda(\bm{s};\theta^{\lambda}) \left(\mathrm{D}(\pi, \beta) - \kappa \cdot \sigma_{\beta} - c) \right)\right],
\end{equation}
where $\mathrm{D}(\pi, \beta) = \lVert \pi(\bm{s}; \theta^{\pi}) - \beta(\bm{s}) \rVert_2^2$ denotes the squared distance between the mean actions of the learned policy $\pi$ and the reference policy $\beta$, consistent with that used in the actor optimization. Here, $\sigma_{\beta}$ represents the average standard deviation over the continuous action dimensions of the multivariate Gaussian reference policy, and $\kappa$ denotes the action dimensionality (i.e., $\kappa = 6$). Empirically, we add a constant $c = 0.1$ to relax the constraint. Intuitively, the Lagrange multiplier increases when the deviation between $\pi$ and $\beta$ exceeds the tolerance margin defined by $\kappa \cdot \sigma_{\beta} + c$, and gradually decays toward zero once the constraint is satisfied.

\paragraph*{Behavior Policy Optimization}
As shown in Fig.~\ref{fig:network}, the behavior policy $\beta$ is optimized separately from the other three networks and updated at a lower frequency to maintain stable decision entropy (i.e., the standard deviation of the action distribution). Without such regulation, the entropy may decay to an extremely low value during training, which can hinder learning and reduce policy diversity. To address this, we update the behavior policy only when a fixed amount of new data is added to the replay buffer. The optimization objective of the behavior policy is given by:
\begin{equation}
\label{eqn:beta_loss}
\mathcal{L}(\theta^\beta) = \mathbb{E}_{(\bm{s}, \bm{a})\sim\mathcal{D}_I} \left[ -\log \beta(\bm{a} \mid \bm{s}; \theta^\beta) \right],
\end{equation}
where the state-action pairs $(\bm{s}, \bm{a})$ are sampled from the intervention buffer $\mathcal{D}_I$. This objective corresponds to standard behavior cloning and encourages $\beta$ to maximize the likelihood of the observed human intervention actions.

\subsection{Human-in-the-loop Training Paradigm}
\ours~builds on the asynchronous actor-critic architecture implemented in prior work~\cite{luo2025precise} to increase the update-to-data ratio. Before training, the human operator collects 20 trajectories of demonstrations for each tasks, and stored in intervention buffer $\mathcal{D}_I$.  During training,
at each actor timestep $t$, \ours~ stores a transition
$(\bm{s}_t,\bm{a}_t,I_t,r_t,d_t,\bm{s}_{t+1})$,
where $I_t\in\{0,1\}$ indicates whether the action came from human intervention ($1$) or RL exploration ($0$), and $d_t\in\{0,1\}$ flags task success, which is given by a reward classifier. 
After each episode, the actor streams transitions to the learner and receives the actor policy parameters from the learner. 

Algorithm~\ref{algo:ours} summarizes the learner-side training procedure. Before entering the main training loop, the behavior policy is pre-trained on data from $\mathcal{D}_I$ for 500 steps. During training, upon receiving online data from the actor side, all intervention samples are added to the intervention buffer $\mathcal{D}_I$, and all online samples, including both intervention and non-intervention data, are added to the online buffer $\mathcal{D}_R$. The behavior policy is updated whenever 50 new samples have been added into $\mathcal{D}_I$. After updating the buffers, we sample mini-batches of equal size 128 from $\mathcal{D}_I$ and $\mathcal{D}_R$, and then update the parameters $\theta^Q$, $\theta^\lambda$, and $\theta^\pi$ according to Eqn.~(\ref{eqn:q_loss})-(\ref{eqn:lag_loss}). The learner thread is terminated by the actor once the time limit is reached.

\begin{figure*}[tb]
    \centering
\subfloat{\includegraphics[width=1.0\textwidth]{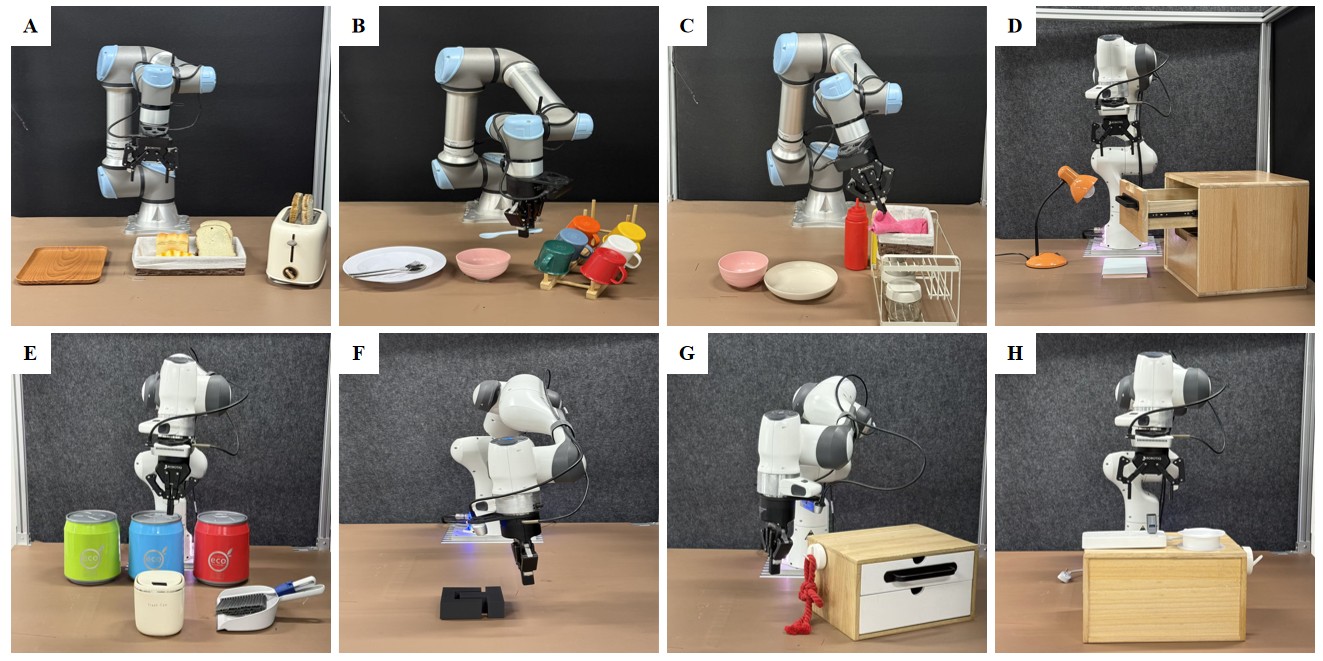}}
    \caption{\textbf{Eight Real-world Manipulation Tasks on Two Embodiments.} (A) Pick-Place Bread (B) Pick-up Spoon (C) Fold Rag (D) Open Cabinet (E) Close Trashbin (F) Push-T (G) Hang Chinese Knot (H) Insert USB.}
    \label{fig:task}
\end{figure*}

\subsection{Key Design Choices and Parameters}
Unlike ideal simulation environments, training real-world RL policies must cope with complex scenarios and imperfect estimations that can significantly affect overall training efficiency. Here, we highlight three key design choices that are critical for the successful training of all real-world RL methods considered in our work.

\paragraph*{Intervention Standard Operating Procedure}
Empirically, we find that the timing of interventions is critical to the successful training of all RL policies. Intervening too frequently, such as whenever the model makes even minor mistakes, can actually reduce training efficiency by making the policy overly reliant on human assistance. To ensure fair comparisons and improve efficiency, we design an Intervention Standard Operating Procedure (ISOP) to guide the human operator’s intervention behavior. The details of ISOP are provided in the Appendix~\ref{supp:sop}.

\paragraph*{Ever-correcting Reward Classifier}
As pointed out by previous work~\cite{luo2025precise}, even when a reward classifier achieves up to $95\%$ precision after offline training, it can still produce many false-negative or false-positive examples when deployed in online environments. A common strategy is to collect additional data using a random policy and to add hand-crafted rules to improve classifier robustness, but this is labor-intensive and difficult to scale. Since a highly precise reward classifier is crucial for successful training under extremely sparse feedback, we instead introduce an ever-correcting reward classifier module during online training. The classifier is retrained whenever newly corrected labels are added to the reward buffer, whether they correct false negatives or false positives.

\paragraph*{Slow-learning Lagrange network}
Since there is no explicit upper bound on the output of the Lagrange network, the multiplier $\lambda(\bm{s})$ can easily grow to extremely large values whenever the constraint is violated, which leads to unstable training for both the Lagrange and policy networks. Following prior work on state-wise Lagrange multipliers~\cite{seo2025state}, we set the learning rate of the Lagrange network to be 0.01 times that of the other networks. Specifically, we use a learning rate of $3\times 10^{-4}$ for the $Q$, $\pi$, and $\beta$ networks, and a learning rate of $3\times 10^{-6}$ for the Lagrange network $\lambda$.

\section{Experiments}
To validate the effectiveness of proposed \ours and to investigate key factors that enable efficient real-world online training, we conduct experiments on 8 manipulation tasks. Our experiments are designed to answer the following research questions (RQ): (1) How much does \ours improve performance over SOTA reinforcement learning (RL) and imitation learning (IL) methods? (2) How robust is the policy trained by \ours compared to other methods? (3) What is the contribution of each component (i.e., the Lagrange term, RL objective, and behavior cloning (BC) objective) to the performance of \ours? (4) How do the intervention quality affect the performance of different methods? 

The remainder of this section is organized as follows. First, we describe the experimental setup, including tasks and hardware platform. Next, we introduce the baselines and evaluation metrics. Finally, we present the experimental results and analyses corresponding to each research question.

\subsection{Experimental Setup}
As shown in Fig.~\ref{fig:task}, we design 8 real-world manipulation tasks covering mixed skills, articulated-object manipulation, precise manipulation, and deformable-object handling. These tasks are challenging due to small interaction regions (e.g., spoon handle, USB port, hook), orientation constraints (e.g., articulated drawer, U-shaped groove), deformability (e.g., Chinese knot, rag), and mixed skills (e.g., pick-and-place). Different from previous works~\cite{luo2025precise,chen2025conrft} that impose strict constraints on the exploration space, we only restrict translational movements for safety considerations, while leaving the rotational space unconstrained.
Detailed task settings and parameters are provided in Appendix~\ref{sec:supp_training_details}.

For teleoperation, we use the custom human-robot collaboration system HACTS~\cite{xu2025hacts} that bilaterally synchronizes the joint states of the robot arm and the operator device in real time. Human intervention and reward correction signals are issued via a foot-pedal device. 

We evaluate all methods on two embodiments, UR5 and Franka Emika, under the same setup. For both embodiments, the state $\bm{s}$ consists of visual and proprioceptive inputs. Visual input is provided by two RGB-D cameras (a right-side view and a wrist camera): the UR5 uses Orbbec Gemini 336 and 336L units, while the Franka uses two RealSense cameras. The proprioceptive input includes the absolute end-effector pose in the base frame and the gripper state. Both embodiments are equipped with a Robotiq 2F-85 gripper and share the same action space: a 6-DoF end-effector pose increment and, for some tasks, a discrete gripper command at each timestep.
The actor process runs on an GeForce RTX 4090 GPU server, while the learner process runs on a RTX A6000 GPU server.
 
\subsection{Baselines and Metrics}
We compare \ours~with three state-of-the-art RL and IL methods as follows:
\begin{itemize}
\item \textbf{HIL-SERL}~\cite{luo2025precise}: integrates a pretrained vision backbone, sparse rewards, and the off-policy RLPD algorithm, leveraging both offline demonstrations and online data while learning from scratch.
\item \textbf{ConRFT}~\cite{chen2025conrft}: a two-stage reinforced fine-tuning framework for VLA models. In offline stage, it pairs calibrated Q-learning~\cite{nakamoto2023cal} with behavior cloning (BC) under a consistency policy. In online stage, it performs online fine-tuning with human interventions for safe and sample-efficient adaptation.

\item \textbf{HG-Dagger}~\cite{kelly2019hg}: a stable and efficient online IL algorithm that learns solely from offline demonstrations and online correction data using a BC loss.

\end{itemize}
For a fair comparison, all online methods are trained within the same actor-learner asynchronous architecture, and the models for HIL-SERL, HG-Dagger, and \ours share identical network architectures and hyperparameter settings. ConRFT is evaluated using its original JAX implementation.

All methods are evaluated on three metrics:
\begin{itemize}
\item \textbf{Success Rate}: Unlike prior work, which measures success on trajectories regardless of human intervention, we only regard an episode as successful if the robot completes the entire trajectory \emph{without any human assistance}, so as to clearly reflect the performance of the learned policy itself. Concretely, during training, the success rate is computed as the fraction of successful episodes within a fixed time window.
\item \textbf{Intervention Ratio}: human‐intervention steps divided by total steps per episode.
\item \textbf{Episode length}: the total number of steps per episode. An episode terminates either when the classifier predicts 1 or when it exceeds the maximum step limit.
\end{itemize}

\begin{figure*}[tb]
    \centering
    \subfloat{\includegraphics[width=1.0\textwidth]{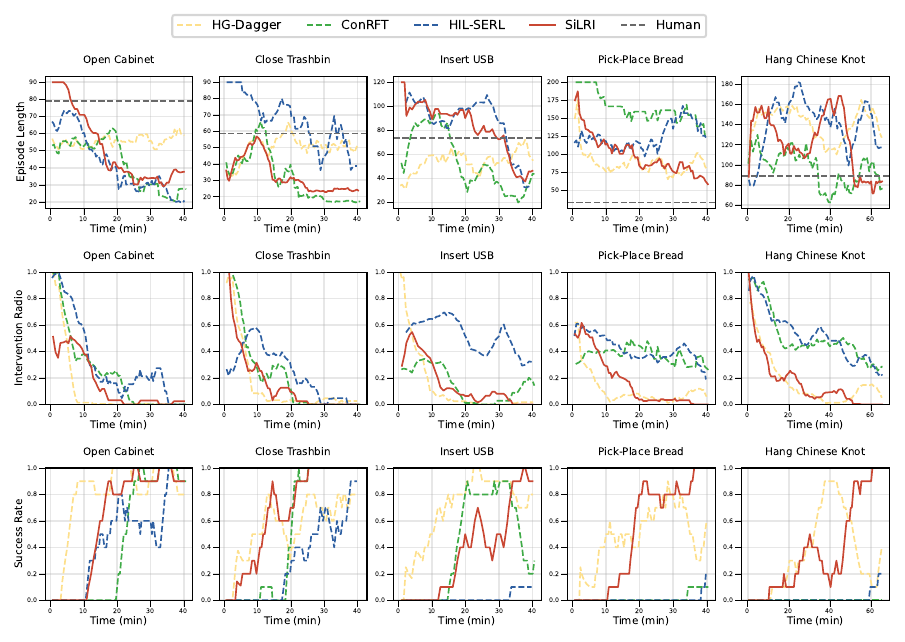}}
    \caption{\textbf{Training Curves of Episode Length, Intervention Ratio, and Success Rate.} We train four online methods on five different tasks. To ensure consistency, all methods within the same task are operated by the same human operator.}
    \label{fig:training_ratio}
\end{figure*}

\subsection{Overall Performance Comparison}
\textbf{To address RQ1}, we conduct online training experiments under the same training duration and conditions for all methods. Fig.~\ref{fig:training_ratio} presents the training curves for episode length, intervention ratio, and success rate across five tasks, while the remaining three tasks are shown in Fig.~\ref{fig:supp_training_ratio} in Appendix. We also compute the average episode length under human operation from the 20 successful offline demonstrations, denoted as “Human”.

As shown in Fig.~\ref{fig:training_ratio}, \ours achieves a high success rate earlier than the other RL methods. For example, in \textit{Open Cabinet} task, \ours reaches a success rate of around 90\% after only 15 minutes of online training, whereas ConRFT and HIL-SERL attain similar performance 10-20 minutes later, respectively. This advantage stems from the imitation objective modulated by our proposed state-wise Lagrange multiplier. Although ConRFT also incorporates an imitation loss, it uses a small coefficient (0.5) during online training. In contrast, for \ours, the average value of the Lagrange multiplier is often larger than 1 at the beginning of training (see Fig.~\ref{fig:multiplier_open_cabinet} and Fig.~\ref{fig:multiplier_close_trashbin} in Appendix, for the convergence behavior of the multiplier). This highlights the critical role of carefully balancing the RL objective and the BC objective when learning from suboptimal intervention data.

To our surprise, the online IL method HG-Dagger shows noticeably better performance in the first 10 minutes and achieves a comparable or even lower intervention ratio than other RL methods, which contrasts with the results reported in ConRFT and HIL-SERL. We attribute this difference to our expanded operation space and longer operation horizon. Prior works typically consider a much smaller workspace and shorter episodes, focusing on a single skill such as picking or insertion. In our experiments, objects may be moved by up to 30 cm along each axis, resulting in a substantially larger exploration space compared to previous setups with less than 10 cm of motion, and we impose no constraints on rotation. Consequently, RL methods require more time for exploration. Nevertheless, even under this challenging exploration regime and sparse feedback, \ours still exhibits a rapid performance warm-up, as discussed above.

Although HG-Dagger attains better performance at the beginning, the learned policy struggles to maintain consistently high performance later, especially in tasks where human operators are prone to making suboptimal interventions. For example, in the \textit{Hang Chinese Knot} task, human operators may take a long time to hang the knot on the hook due to the deformability of the Chinese knot and the small interaction area around the hook.
In this setting, \ours achieves a stable 100\% success rate after 60 minutes, whereas HG-DAGGER drops to 40\% as more suboptimal human interventions are introduced.
Across other tasks, HG-DAGGER similarly fails to maintain a 100\% success rate, while \ours remains consistently high once the policy has converged.

These results highlight the complementary yet crucial roles of RL objective and BC objective in real-world RL under suboptimal interventions. By employing self-adaptive Lagrange weights, \ours effectively marries the strengths of RL and BC, enabling robust learning on a broader class of tasks beyond precision manipulation.

Lastly, we observe a clear reduction in episode length for all RL methods, along with shorter execution times compared to HG-DAGGER. For example, in the \textit{Open Cabinet} and \textit{Close Trashbin} tasks, HG-DAGGER requires approximately 20 and 25 more steps, respectively, than \ours to complete the task. Notably, even compared to human operators, \ours demonstrates faster policy execution in tasks that require precise manipulation, such as \textit{Insert USB} and \textit{Close Trashbin}. These results indicate that RL methods, represented by \ours, are well suited for tasks demanding high execution speed and performance, potentially exceeding human-operated efficiency.

\begin{table*}[htbp]
\centering
\caption{\textbf{Robustness Experiments across 4 Robotic Manipulation Tasks.} In the original setting, the object’s initial position is the same as in training. In the disturbance setting, external perturbations are introduced to evaluate the robustness.}
\label{tab:exp-robust}
\resizebox{1.0\textwidth}{!}{%
\begin{tabular}{l|cccc|cccc} 
\hline
\multirow{2}{*}{\textbf{Task} / \textbf{Method}} 
& \multicolumn{4}{c|}{\textbf{Original}}& \multicolumn{4}{c}{\textbf{Disturbance}} \\
\cline{2-9}
 & Close Trashbin &  Push-T & Hang Chinese Knot & Pick Spoon  & Close Trashbin & Push-T & Hang Chinese Knot & Pick Spoon \\
\hline
HG-Dagger & 0.93 & 0.53 & 0.53 & \cellcolor[HTML]{E0F4FF}0.87 & 0.07 & 0.27 & 0.27 & 0.27\\
HIL-SERL & 0.93 & 0.27 & 0 & 0.47 & 0.33 & 0.2 & 0 & 0.27 \\
ConRFT & \cellcolor[HTML]{E0F4FF} 1 & 0.47 & 0 & \cellcolor[HTML]{E0F4FF}0.87 & 0.47 & 0.27 & 0 & 0.6 \\
\rowcolor[HTML]{E0F4FF}\ours & 1 & 0.67 & 1 & 0.87 & 0.93 & 0.47 & 0.53 & 0.8 \\
\hline
\end{tabular}
}
\end{table*}

\subsection{Robustness Experiments}
\textbf{To address RQ2}, we evaluate methods on four tasks where the robot is prone to making mistakes due to dynamic changes or its own execution errors. We deliberately introduce external disturbances to examine the robustness and failure recovery ability of each method. The evaluation tasks and disturbance rules are as follows:
\begin{itemize}
    \item \textbf{Close Trashbin}: Move the trashbin to 5 different poses (varying in both translation and orientation) when the robot gripper is approaching the lid. Each pose is evaluated 3 times.
    \item \textbf{Push-T}: Move T-shaped object along 5 different directions while the robot is pushing it. Each direction is evaluated 3 times.
    \item \textbf{Hang Chinese Knot}: Move the Chinese knot to five different positions just before the robot is about to pick it up. Each position is evaluated 3 times.
    \item \textbf{Pick Spoon}: Move the spoon to the left or right along the bowl rim just before the robot is about to pick it up.

\end{itemize}

For each task, we evaluate each method in 15 trials in total and check whether it can successfully recover from these disturbances within the specified time limit.

As shown in Tab.~\ref{tab:exp-robust}, we report results under both the original setting (no external disturbances) and the disturbance setting, where objects are perturbed according to the above rules. Firstly, we observe that \ours achieves high success rates on these tasks under original settings. For example, in the \textit{Hang Chinese Knot} task, only \ours reaches a 100\% success rate, whereas the other two RL methods, HIL-SERL and ConRFT, fail to complete this task fully autonomously due to the large exploration space. 

Secondly, a high success rate in the original training setting does not necessarily indicate strong robustness. For example, in the \textit{Close Trashbin} task, HG-DAGGER attains 93\% success rate in the original setting, but its performance drops to 7\% under disturbances, highlighting the tendency of online BC methods to overfit to a limited set of states. \ours also exhibits noticeable performance degradation on the \textit{Hang Chinese Knot} task, the success rate decreases by 47\% under disturbances, primarily because the policy fails to close the gripper after moving above the knot. In our implementation, the gripper is updated via a simple BC strategy, chosen due to the low control frequency and large exploration space.  
Nevertheless, for the other tasks, \ours maintains good recovery performance. For example, in \textit{Pick Spoon}, \ours achieves a success rate of 80\% under disturbances, reliably re-approaching and re-grasping the spoon after it falls into the bowl or is moved by the human.

\subsection{Ablation Studies}
Recall that in Eqn.~(\ref{eqn:actor_loss}), the policy optimization objective in \ours consists of three terms: the RL term, the Lagrange term, and the BC term. To assess the impact of each component and address \textbf{RQ3}, we conducted experiments on the \textit{Close Trashbin} task by removing these terms individually.

As shown in Fig.~\ref{fig:ablation}, removing the RL objective (green line) leads to a significant performance drop, although some success is observed around 10 minutes of online training. This highlights the necessity of guidance from the Q function, which approximates value information from interactions. Next, we remove the BC term entirely from Eqn.~(\ref{eqn:actor_loss}), optimizing only the RL objective without Lagrange multiplier normalization. In this setting, \ours w/o BC (blue line) converges similarly to the full method, but approximately 13 minutes later, highlighting the critical role of the BC term in accelerating training. 

Finally, we replace the Lagrange multiplier with a constant value of 0.5. The policy (yellow line) achieves a rapid warm-up similar to \ours, completing the task autonomously without human intervention after 10 minutes. However, the constant coefficient can harm long-term performance, resulting in a performance drop when additional suboptimal human interventions are introduced during online training.

\begin{figure}
    \centering
    \includegraphics[width=0.8\columnwidth]{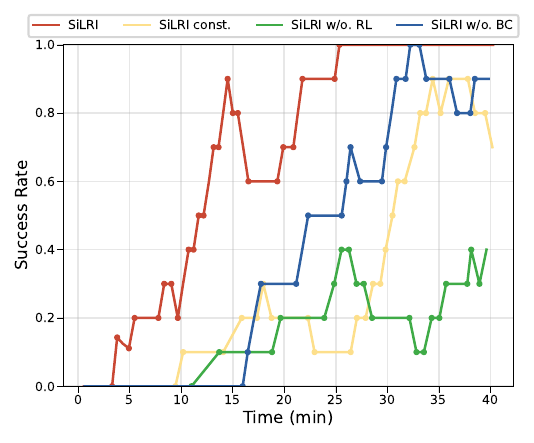}
    \caption{\textbf{Ablation Experiments in Close Trashbin.} 
    \label{fig:ablation}}
\end{figure}

\begin{figure}
    \centering
    \subfloat{\includegraphics[width=0.8\columnwidth]{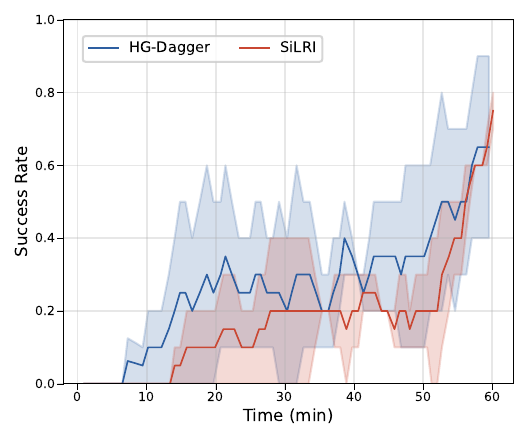}}
    \caption{\textbf{Training curves of different methods in the Push-T task under two human operators (skilled and unskilled).}}
    \label{fig:intervention}
\end{figure}

\subsection{Investigations on Data Quality}
To investigate how intervention quality and the exploration space affect training performance, we conduct experiments under controlled settings. We consider two human operators: one skilled and familiar with the teleoperation system and intervention timing, and the other unskilled and prone to making mistakes during intervention. 

We quantitatively measure data quality using the number of steps per successful episode, where a shorter episode length corresponds to more accurate and higher-quality intervention data. In the Push-T task, the skilled operator requires 99 steps on average to complete an episode, whereas the unskilled operator requires 125 steps.

As shown in Fig.~\ref{fig:intervention}, we plot the training curves of HG-Dagger and \ours on the \emph{Push-T} task, where the mean and confidence interval are computed from two training processes conducted by the skilled and unskilled human operators. During 50--60 minutes of online training, \ours exhibits more stable performance, maintaining a success rate of around 74\%. In contrast, HG-Dagger is more sensitive to intervention quality: although it also reaches a relatively high average success rate of 63\% at 60 minutes, its lower confidence bound can drop to 40\%. This demonstrates that, compared with the online IL method HG-Dagger, the RL-based \ours is less dependent on near-optimal interventions and remains effective across human operators with varying skill levels.

\section{Conclusion and Future Work}
In this work, we propose a state-wise Lagrangian reinforcement learning (RL) algorithm from suboptimal interventions, for real-world robot manipulation training. Observing the fact that human operators have different confidence level and manipulation skill over different states, a state-dependent constraint is added to the RL objective to automatically adjust the distance between human policy and learned policy. Building on a human-as-copilot teleoperation system, we evaluate our method with other state-of-the-art online RL and imitation learning methods on 8 manipulation tasks on two embodiments. Experimental results show the efficiency of \ours to utilize the suboptimal interventions at the beginning of training and converge to a high success rate at the end. Other ablation studies and investigation experiments also conducted to learn the advantage of \ours.

During real-world experiments, we identify several limitations of current online training methods, including \ours. 

First, the training time required for online methods to reach human-comparable performance remains long. Even for an unskilled teleoperation operator, the Hang Chinese Knot task can typically be handled after only a few minutes of practice. In contrast, online learning methods require at least 60 minutes of training to achieve a workable solution. In future work, we plan to incorporate vision-language-action models with pre-existing manipulation skills to reduce the required online fine-tuning time.

Second, there remains substantial room for improvement in robustness. While \ours demonstrates strong performance in static settings, its robustness degrades in dynamic environments, particularly for challenging tasks with large exploration spaces. Beyond using intervention data directly, richer intervention signals could be leveraged to implicitly constrain the exploration space and improve robustness.

Despite these limitations, we believe that online reinforcement learning remains a promising paradigm for achieving superhuman intelligence, given its efficiency in daily manipulation tasks and its potential to surpass human performance in precise manipulation scenarios.

\bibliographystyle{unsrt}
\bibliography{ref}

@article{luo2025precise,
  title={Precise and dexterous robotic manipulation via human-in-the-loop reinforcement learning},
  author={Luo, Jianlan and Xu, Charles and Wu, Jeffrey and Levine, Sergey},
  journal={Science Robotics},
  volume={10},
  number={105},
  pages={eads5033},
  year={2025}
}

@article{chen2025conrft,
  title={Conrft: A reinforced fine-tuning method for vla models via consistency policy},
  author={Chen, Yuhui and Tian, Shuai and Liu, Shugao and Zhou, Yingting and Li, Haoran and Zhao, Dongbin},
  journal={arXiv preprint arXiv:2502.05450},
  year={2025}
}

@inproceedings{guo2025improving,
  author       = {Yanjiang Guo and
                  Jianke Zhang and
                  Xiaoyu Chen and
                  Xiang Ji and
                  Yen{-}Jen Wang and
                  Yucheng Hu and
                  Jianyu Chen},
  title        = {Improving Vision-Language-Action Model with Online Reinforcement Learning},
  booktitle    = {{IEEE} International Conference on Robotics and Automation},
  pages        = {15665--15672},
  publisher    = {{IEEE}},
  year         = {2025}
}

@article{hu2022lora,
  title={Lora: Low-rank adaptation of large language models.},
  author={Hu, Edward J and Shen, Yelong and Wallis, Phillip and Allen-Zhu, Zeyuan and Li, Yuanzhi and Wang, Shean and Wang, Lu and Chen, Weizhu and others},
  journal={ICLR},
  volume={1},
  number={2},
  pages={3},
  year={2022}
}

@article{liu2025can,
  title={What can rl bring to vla generalization? an empirical study},
  author={Liu, Jijia and Gao, Feng and Wei, Bingwen and Chen, Xinlei and Liao, Qingmin and Wu, Yi and Yu, Chao and Wang, Yu},
  journal={arXiv preprint arXiv:2505.19789},
  year={2025}
}

@article{lu2025vla,
  title={Vla-rl: Towards masterful and general robotic manipulation with scalable reinforcement learning},
  author={Lu, Guanxing and Guo, Wenkai and Zhang, Chubin and Zhou, Yuheng and Jiang, Haonan and Gao, Zifeng and Tang, Yansong and Wang, Ziwei},
  journal={arXiv preprint arXiv:2505.18719},
  year={2025}
}

@article{xia2025robotic,
  title={Robotic Policy Learning via Human-assisted Action Preference Optimization},
  author={Xia, Wenke and Yang, Yichu and Wu, Hongtao and Ma, Xiao and Kong, Tao and Hu, Di},
  journal={arXiv preprint arXiv:2506.07127},
  year={2025}
}

@article{nakamoto2023cal,
  title={Cal-ql: Calibrated offline rl pre-training for efficient online fine-tuning},
  author={Nakamoto, Mitsuhiko and Zhai, Simon and Singh, Anikait and Sobol Mark, Max and Ma, Yi and Finn, Chelsea and Kumar, Aviral and Levine, Sergey},
  journal={Advances in Neural Information Processing Systems},
  volume={36},
  pages={62244--62269},
  year={2023}
}

@inproceedings{korkmaz2025mile,
  author       = {Yigit Korkmaz and
                  Erdem Biyik},
  title        = {{MILE:} Model-Based Intervention Learning},
  booktitle    = {{IEEE} International Conference on Robotics and Automation},
  pages        = {15673--15679},
  publisher    = {{IEEE}},
  year         = {2025}
}

@inproceedings{luorlif,
  title={RLIF: Interactive Imitation Learning as Reinforcement Learning},
  author={Luo, Jianlan and Dong, Perry and Zhai, Yuexiang and Ma, Yi and Levine, Sergey},
  booktitle={The Twelfth International Conference on Learning Representations},
  year={2023}
}

@article{chen2025rlrc,
  title={RLRC: Reinforcement Learning-based Recovery for Compressed Vision-Language-Action Models},
  author={Chen, Yuxuan and Li, Xiao},
  journal={arXiv preprint arXiv:2506.17639},
  year={2025}
}

@article{julg2025refined,
  title={Refined Policy Distillation: From VLA Generalists to RL Experts},
  author={J{\"u}lg, Tobias and Burgard, Wolfram and Walter, Florian},
  journal={arXiv preprint arXiv:2503.05833},
  year={2025}
}

@article{schulman2017proximal,
  title={Proximal policy optimization algorithms},
  author={Schulman, John and Wolski, Filip and Dhariwal, Prafulla and Radford, Alec and Klimov, Oleg},
  journal={arXiv preprint arXiv:1707.06347},
  year={2017}
}

@article{kim2024openvla,
  title={Openvla: An open-source vision-language-action model},
  author={Kim, Moo Jin and Pertsch, Karl and Karamcheti, Siddharth and Xiao, Ted and Balakrishna, Ashwin and Nair, Suraj and Rafailov, Rafael and Foster, Ethan and Lam, Grace and Sanketi, Pannag and others},
  journal={arXiv preprint arXiv:2406.09246},
  year={2024}
}

@article{chi2023diffusion,
  title={Diffusion policy: Visuomotor policy learning via action diffusion},
  author={Chi, Cheng and Xu, Zhenjia and Feng, Siyuan and Cousineau, Eric and Du, Yilun and Burchfiel, Benjamin and Tedrake, Russ and Song, Shuran},
  journal={The International Journal of Robotics Research},
  pages={02783649241273668},
  year={2023}
}

@article{zhao2023learning,
  title={Learning fine-grained bimanual manipulation with low-cost hardware},
  author={Zhao, Tony Z and Kumar, Vikash and Levine, Sergey and Finn, Chelsea},
  journal={arXiv preprint arXiv:2304.13705},
  year={2023}
}

@article{wu2025robocopilot,
  title={Robocopilot: Human-in-the-loop interactive imitation learning for robot manipulation},
  author={Wu, Philipp and Shentu, Yide and Liao, Qiayuan and Jin, Ding and Guo, Menglong and Sreenath, Koushil and Lin, Xingyu and Abbeel, Pieter},
  journal={arXiv preprint arXiv:2503.07771},
  year={2025}
}

@inproceedings{wu2024gello,
  title={Gello: A general, low-cost, and intuitive teleoperation framework for robot manipulators},
  author={Wu, Philipp and Shentu, Yide and Yi, Zhongke and Lin, Xingyu and Abbeel, Pieter},
  booktitle={2024 IEEE/RSJ International Conference on Intelligent Robots and Systems (IROS)},
  pages={12156--12163},
  year={2024}
}

@article{black2410pi0,
  title={$\pi$0: A vision-language-action flow model for general robot control. CoRR, abs/2410.24164, 2024. doi: 10.48550},
  author={Black, Kevin and Brown, Noah and Driess, Danny and Esmail, Adnan and Equi, Michael and Finn, Chelsea and Fusai, Niccolo and Groom, Lachy and Hausman, Karol and Ichter, Brian and others},
  journal={arXiv preprint ARXIV.2410.24164},
  year={2024}
}

@inproceedings{khazatsky2024droid,
  title={DROID: A large-scale in-the-wild robot manipulation dataset},
  author={Khazatsky, Alexander and Pertsch, Karl and Nair, Suraj and Balakrishna, Ashwin and Dasari, Sudeep and Karamcheti, Siddharth and Nasiriany, Soroush and Srirama, Mohan Kumar and Chen, Lawrence Yunliang and Ellis, Kirsty and others},
  booktitle={Robotics: Science and Systems},
  year={2024}
}

@article{xu2025hacts,
  title={HACTS: a Human-As-Copilot Teleoperation System for Robot Learning},
  author={Xu, Zhiyuan and Zhao, Yinuo and Wu, Kun and Liu, Ning and Ji, Junjie and Che, Zhengping and Liu, Chi Harold and Tang, Jian},
  journal={2025 IEEE/RSJ International Conference on Intelligent Robots and Systems (IROS)},
  year={2025}
}

@article{team2024octo,
  title={Octo: An open-source generalist robot policy},
  author={Team, Octo Model and Ghosh, Dibya and Walke, Homer and Pertsch, Karl and Black, Kevin and Mees, Oier and Dasari, Sudeep and Hejna, Joey and Kreiman, Tobias and Xu, Charles and others},
  journal={arXiv preprint arXiv:2405.12213},
  year={2024}
}

@inproceedings{van2016deep,
  title={Deep reinforcement learning with double q-learning},
  author={Van Hasselt, Hado and Guez, Arthur and Silver, David},
  booktitle={Proceedings of the AAAI conference on artificial intelligence},
  volume={30},
  year={2016}
}

@article{seo2025state,
  title={State-Dependent Lagrange Multipliers for State-Wise Safety in Constrained Reinforcement Learning},
  author={Seo, Minseok and Choi, Kyunghwan},
  journal={Authorea Preprints},
  year={2025}
}

@inproceedings{kelly2019hg,
  title={Hg-dagger: Interactive imitation learning with human experts},
  author={Kelly, Michael and Sidrane, Chelsea and Driggs-Campbell, Katherine and Kochenderfer, Mykel J},
  booktitle={2019 International Conference on Robotics and Automation (ICRA)},
  pages={8077--8083},
  year={2019}
}

@article{nair2020awac,
  title={Awac: Accelerating online reinforcement learning with offline datasets},
  author={Nair, Ashvin and Gupta, Abhishek and Dalal, Murtaza and Levine, Sergey},
  journal={arXiv preprint arXiv:2006.09359},
  year={2020}
}

@inproceedings{hao2021matters,
  title={What matters: Attentive and relational feature aggregation network for video-text retrieval},
  author={Hao, Xiaoshuai and Zhou, Yucan and Wu, Dayan and Zhang, Wanqian and Li, Bo and Wang, Weiping and Meng, Dan},
  booktitle={IEEE International Conference on Multimedia and Expo (ICME)},
  pages={1--6},
  year={2021}
}

@article{albanie2020end,
  title={The end-of-end-to-end: A video understanding pentathlon challenge (2020)},
  author={Albanie, Samuel and Liu, Yang and Nagrani, Arsha and Miech, Antoine and Coto, Ernesto and Laptev, Ivan and Sukthankar, Rahul and Ghanem, Bernard and Zisserman, Andrew and Gabeur, Valentin and others},
  journal={arXiv preprint arXiv:2008.00744},
  year={2020}
}

\clearpage
\newpage

\section{Task Training Details}\label{sec:supp_training_details}
\subsection{Details on Parameter Settings}
Here, we provide detailed descriptions of the overall training parameters and task settings. 

The task-specific settings are summarized in Tab.~\ref{tab:task_settings}. The space constraint is crucial for efficient training of HIL-SERL and ConRFT. This is because their Q-functions rely on self-collected experience to obtain accurate value estimates; simply intervening when the robot drifts far from the target is insufficient to provide informative data for these methods. In contrast, for \ours and HG-Dagger, the boundary primarily serves as a safety mechanism, since the imitation loss already guides the policy toward human behavior.

Tab.~\ref{tab:training_settings} lists the key hyperparameters used in our experiments. The same training hyperparameters are applied across all tasks. To ensure a fair comparison, HG-Dagger, HIL-SERL, and \ours share identical parameter settings, while ConRFT follows its officially released hyperparameters and network architectures.

\subsection{Details on Lagrange Multipliers}
We also provide the training curves of the Lagrange multipliers in Fig.~\ref{fig:multiplier_open_cabinet} and Fig.~\ref{fig:multiplier_close_trashbin} to illustrate the self-adaptive trade-off between the RL objective and the BC objective in \ours. Specifically, we plot the batch-mean output of the Lagrange multiplier network during training. As shown in Fig.~\ref{fig:multiplier_open_cabinet}, at the beginning of training, the large discrepancy between the behavior policy $\beta$ and the actor policy $\pi$ causes the Lagrange multiplier to quickly rise above 1, so that updates to $\pi$ are mainly driven by the BC loss. As training progresses and $\pi$ becomes closer to $\beta$, the Lagrange multiplier gradually decreases and stabilizes around 0.3, shifting the emphasis toward the RL objective.

\begin{table*}[t]
\centering
\caption{\textbf{Task Setting Details}. All tasks share a 7-dimensional action space: 6-DoF end-effector pose increments and discrete gripper actions (open or close). When the gripper is fixed for a task, the last dimension remains a constant value; when the gripper is not fixed, the last dimension is generated by a separate policy network.}
\resizebox{1.0\textwidth}{!}{
\begin{tabular}{lccccc}
\toprule
\textbf{Task/Parameter} & \textbf{Fix Gripper}& \textbf{Max episode length} & \textbf{Train time} & \textbf{Space constraint} & \textbf{Embodiment} \\
\midrule
Close Trashbin & True & 90 & 40 min  &$0.47 \leq x\leq  0.68$ m, $-0.12 \leq y\leq  0.10 $ m, $0.29 \leq z\leq  0.48$ m & Franka\\
Open Cabinet & True & 90 & 40 min  &$0.37 \leq x\leq  0.52$ m, $-0.11 \leq y\leq  0.13 $ m, $0.39 \leq z\leq  0.49$ m & Franka\\
Insert USB & True & 120 & 40 min  &$0.60 \leq x\leq  0.67$ m, $-0.14 \leq y\leq  -0.03 $ m, $0.38 \leq z\leq  0.51$ m & Franka\\
Push-T & True & 120 & 60 min  &$0.45 \leq x\leq  0.69$ m, $-0.05 \leq y\leq  0.33 $ m, $0.16 \leq z\leq  0.28$ m & Franka\\
Hang Chinese Knot & False & 200 & 65 min  &$0.47 \leq x\leq  0.74$ m, $-0.16 \leq y\leq  0.13 $ m, $0.17 \leq z\leq  0.45$ m & Franka\\
Pick-place Bread  & False & 200 & 40 min & $-0.65 \leq x\leq  -0.27$ m, $-0.42 \leq y\leq  -0.06 $ m, $0.19 \leq z\leq  0.31$ m & UR\\
Fold Rag  & False & 100 & 40 min &$-0.55 \leq x\leq  -0.37$ m, $-0.62 \leq y\leq  -0.42 $ m, $0.29 \leq z\leq  0.41$ m & UR\\
Pick-up Spoon  & False & 100 & 60 min  &$-0.51 \leq x\leq  -0.35$ m, $-0.43 \leq y\leq  -0.27 $ m, $0.22 \leq z\leq  0.31$ m & UR\\

\bottomrule
\end{tabular}
}
\label{tab:task_settings}
\end{table*}

\begin{table}[t]
\centering
\caption{\textbf{Training Details}.}
\begin{tabular}{ll}
\toprule
\textbf{Parameter} & \textbf{Value}\\
\midrule
Online Buffer Batch Size & 128\\
Offline Buffer Batch Size & 128\\
Actor Learning Rate  & 0.0003\\
Critic Learning Rate  & 0.0003\\
Expert Learning Rate  & 0.0003\\
Lagrange Learning Rate  & 0.000003\\
Optimizer & Adam\\
Discount Factor & 0.97\\
Initial Offline Demonstrations & 20\\
$\kappa$ & 6 \\
\bottomrule
\end{tabular}
\label{tab:training_settings}
\end{table}

\subsection{Details on Metric Computation}
Beyond the algorithm itself, real-world online training results are influenced by many external factors. As a result, the training curves can be noisy and exhibit substantial fluctuations, especially for binary metrics such as the success rate and intervention ratio. To more clearly show performance trends over the course of training without loss of authenticity, we apply the following smoothing function to post-process the Success Rate, Intervention Ratio, and Episode Length metrics.

We use the sliding window average method to process the index sequence, and calculate the arithmetic mean of the data in the specified window to smooth the noise fluctuation, while retaining the time series characteristics of the training trend. Specifically, the sliding window is composed of the data of each episode and its first nine episodes, and the average success rate in the window is calculated point by point.

\section{Theoretical Analysis}
\label{sec:proof}

This section shows that, under standard Gaussian modeling assumptions, the
state-wise KL constraint in Eqn.~(\ref{eqn:original_objective}) implies a
state-wise constraint on the distance between policy means, as used in
Eqn.~(\ref{eqn:objective}).

\begin{assumption}[Isotropic Gaussian Policies]
\label{ass:gaussian}
For any state $\bm{s}$, the human reference policy is modeled as an isotropic Gaussian
\begin{equation}
\beta(\cdot \mid \bm{s}) = \mathcal{N}\!\big(\bm{\mu}_\beta(\bm{s}), \ \sigma_\beta(\bm{s})^2 \bm{I}\big),
\end{equation}
and the learned policy is modeled as an isotropic Gaussian with \emph{fixed} standard deviation
\begin{equation}
\pi(\cdot \mid \bm{s}) = \mathcal{N}\!\big(\bm{\mu}_\pi(\bm{s}), \ \sigma_\pi^2 \bm{I}\big),
\end{equation}
where $\sigma_\pi>0$ is a constant, $\bm{I}$ is the $d\times d$ identity matrix, and $d$ denotes the action dimension.
\end{assumption}

\begin{assumption}[Bounded Human Uncertainty]
\label{ass:bounded_sigma}
There exist constants $0<\underline{\sigma}\le \overline{\sigma}<\infty$ such that
$\sigma_\beta(\bm{s})\in[\underline{\sigma},\overline{\sigma}]$ for all $\bm{s}$.
\end{assumption}

\begin{lemma}[KL Divergence Between Gaussians]
\label{lem:gaussian_kl}
For two non-degenerate Gaussians $\mathcal{N}(\bm{m}_1,\Sigma_1)$ and $\mathcal{N}(\bm{m}_2,\Sigma_2)$, the KL divergence admits the closed form
\begin{equation}
\begin{aligned}
\mathrm{D}_{\mathrm{KL}}&\left(\mathcal{N}(\bm{m}_1,\Sigma_1)\Vert\mathcal{N}(\bm{m}_2,\Sigma_2)\right) \\
&=
\frac{1}{2}\Big(
\mathrm{tr}(\Sigma_2^{-1}\Sigma_1)
+
(\bm{m}_2-\bm{m}_1)^\top \Sigma_2^{-1}(\bm{m}_2-\bm{m}_1)\\
&\quad-d
+
\ln\frac{\det \Sigma_2}{\det \Sigma_1}
\Big).
\end{aligned}
\label{eq:gaussian_kl}
\end{equation}
\end{lemma}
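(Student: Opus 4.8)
The plan is to compute the KL divergence directly from its definition, $\mathrm{D}_{\mathrm{KL}}(p\Vert q)=\mathbb{E}_{\bm{x}\sim p}\big[\ln p(\bm{x})-\ln q(\bm{x})\big]$, specialized to $p=\mathcal{N}(\bm{m}_1,\Sigma_1)$ and $q=\mathcal{N}(\bm{m}_2,\Sigma_2)$. Non-degeneracy guarantees that $\Sigma_1,\Sigma_2$ are invertible with positive determinants, so every term below is well-defined. First I would substitute the Gaussian log-density $\ln p(\bm{x})=-\tfrac{d}{2}\ln(2\pi)-\tfrac{1}{2}\ln\det\Sigma_1-\tfrac{1}{2}(\bm{x}-\bm{m}_1)^\top\Sigma_1^{-1}(\bm{x}-\bm{m}_1)$ and the analogous expression for $q$. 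Upon taking the difference, the $-\tfrac{d}{2}\ln(2\pi)$ terms cancel, leaving the log-determinant term $\tfrac{1}{2}\ln(\det\Sigma_2/\det\Sigma_1)$ together with two quadratic forms, one weighted by $\Sigma_1^{-1}$ and the other by $\Sigma_2^{-1}$, whose expectations under $p$ I must evaluate.

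The key computational tool is the trace identity for the expectation of a quadratic form: for $\bm{x}\sim\mathcal{N}(\bm{m}_1,\Sigma_1)$ and any symmetric $A$, one has $\mathbb{E}[(\bm{x}-\bm{c})^\top A(\bm{x}-\bm{c})]=\mathrm{tr}(A\Sigma_1)+(\bm{m}_1-\bm{c})^\top A(\bm{m}_1-\bm{c})$. This follows from the decomposition $\bm{x}-\bm{c}=(\bm{x}-\bm{m}_1)+(\bm{m}_1-\bm{c})$, after noting that the cross term vanishes because $\mathbb{E}[\bm{x}-\bm{m}_1]=\bm{0}$ and that $\mathbb{E}[(\bm{x}-\bm{m}_1)^\top A(\bm{x}-\bm{m}_1)]=\mathrm{tr}(A\,\mathbb{E}[(\bm{x}-\bm{m}_1)(\bm{x}-\bm{m}_1)^\top])=\mathrm{tr}(A\Sigma_1)$. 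Applying the identity with $A=\Sigma_1^{-1}$, $\bm{c}=\bm{m}_1$ gives $\mathrm{tr}(\Sigma_1^{-1}\Sigma_1)=d$, and applying it with $A=\Sigma_2^{-1}$, $\bm{c}=\bm{m}_2$ gives $\mathrm{tr}(\Sigma_2^{-1}\Sigma_1)+(\bm{m}_1-\bm{m}_2)^\top\Sigma_2^{-1}(\bm{m}_1-\bm{m}_2)$.

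Finally I would assemble the three contributions: $\tfrac{1}{2}\ln(\det\Sigma_2/\det\Sigma_1)$ from the determinants, $-\tfrac{1}{2}d$ from the $\Sigma_1^{-1}$ quadratic term, and $+\tfrac{1}{2}\big[\mathrm{tr}(\Sigma_2^{-1}\Sigma_1)+(\bm{m}_2-\bm{m}_1)^\top\Sigma_2^{-1}(\bm{m}_2-\bm{m}_1)\big]$ from the $\Sigma_2^{-1}$ term, where I use that the quadratic form is invariant under the swap $\bm{m}_1-\bm{m}_2\mapsto\bm{m}_2-\bm{m}_1$. Collecting these reproduces Eqn.~(\ref{eq:gaussian_kl}) exactly. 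This is a standard computation with no genuine obstacle; the only point demanding slight care is the cross-term cancellation in the trace identity and keeping track of which covariance weights which quadratic form, so I would keep the decomposition $\bm{x}-\bm{m}_2=(\bm{x}-\bm{m}_1)+(\bm{m}_1-\bm{m}_2)$ explicit to avoid sign or indexing errors.
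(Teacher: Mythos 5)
Your proof is correct. Note that the paper itself states this lemma without proof, treating it as the classical closed-form expression for the KL divergence between multivariate Gaussians; your derivation from the definition $\mathbb{E}_{p}[\ln p - \ln q]$, combined with the trace identity $\mathbb{E}[(\bm{x}-\bm{c})^\top A(\bm{x}-\bm{c})]=\mathrm{tr}(A\Sigma_1)+(\bm{m}_1-\bm{c})^\top A(\bm{m}_1-\bm{c})$, is exactly the standard argument that fills this gap, with all terms assembled correctly.
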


Applying Lemma~\ref{lem:gaussian_kl} under Assumption~\ref{ass:gaussian} with
$\Sigma_\pi=\sigma_\pi^2\bm{I}$ and $\Sigma_\beta(\bm{s})=\sigma_\beta(\bm{s})^2\bm{I}$ yields, for any $\bm{s}$,
\begin{align}
\mathrm{D}_{\mathrm{KL}}&\big(\pi(\cdot\mid \bm{s}) \Vert \beta(\cdot\mid \bm{s})\big)\\
&=
\frac{1}{2}\Big(
d\frac{\sigma_\pi^2}{\sigma_\beta(\bm{s})^2}
+
\frac{\|\bm{\mu}_\pi(\bm{s})-\bm{\mu}_\beta(\bm{s})\|_2^2}{\sigma_\beta(\bm{s})^2}
-d
+
d\ln\frac{\sigma_\beta(\bm{s})^2}{\sigma_\pi^2}
\Big) \nonumber\\
&=
\frac{1}{2}\cdot
\frac{\|\Delta\bm{\mu}(\bm{s})\|_2^2}{\sigma_\beta(\bm{s})^2}
+
\frac{1}{2}c\!\left(\sigma_\beta(\bm{s})\right),
\label{eq:kl_split}
\end{align}
where $\Delta\bm{\mu}(\bm{s}) \triangleq \bm{\mu}_\pi(\bm{s})-\bm{\mu}_\beta(\bm{s})$ and
\begin{equation}
c(\sigma)\triangleq
d\Big(
\frac{\sigma_\pi^2}{\sigma^2}-1+\ln\frac{\sigma^2}{\sigma_\pi^2}
\Big).
\label{eq:c_sigma}
\end{equation}
Moreover, since $\ln x \le x-1$ for any $x>0$, we have $c(\sigma)\ge 0$ for all $\sigma>0$.


\begin{theorem}[From a KL Constraint to a Mean-Distance Constraint]
\label{thm:mean_bound}
Suppose Assumptions~\ref{ass:gaussian}--\ref{ass:bounded_sigma} hold and
\begin{equation}
\mathrm{D}_{\mathrm{KL}}\big(\pi(\cdot\mid \bm{s}) \Vert \beta(\cdot\mid \bm{s})\big)\le \epsilon,
\qquad \forall \bm{s}.
\label{eq:kl_constraint_app}
\end{equation}
Then, for all $\bm{s}$,
\begin{equation}
\frac{\|\Delta\bm{\mu}(\bm{s})\|_2^2}{\sigma_\beta(\bm{s})^2}
\le
2\epsilon - c\!\left(\sigma_\beta(\bm{s})\right),
\label{eq:mean_bound_state}
\end{equation}
where $\Delta\bm{\mu}(\bm{s})\triangleq \bm{\mu}_\pi(\bm{s})-\bm{\mu}_\beta(\bm{s})$ and
$c(\sigma)$ is defined in Eqn.~(\ref{eq:c_sigma}).
Moreover, letting
\begin{equation}
c_{\max}\triangleq \max_{\sigma\in[\underline{\sigma},\overline{\sigma}]} c(\sigma) < \infty,
\label{eq:cmax}
\end{equation}
we have
\begin{equation}
\|\Delta\bm{\mu}(\bm{s})\|_2^2
\le
\kappa\, \sigma_\beta(\bm{s})^2,
\qquad \forall \bm{s},
\label{eq:mean_bound_final}
\end{equation}
with $\kappa \triangleq 2\epsilon - c_{\max}$.
\end{theorem}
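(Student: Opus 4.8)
The plan is to leverage the closed-form KL decomposition already derived in Eqn.~(\ref{eq:kl_split}), which splits the KL divergence into a mean-distance term and a variance-mismatch term $\tfrac{1}{2}c(\sigma_\beta(\bm{s}))$. Since the theorem is essentially a bookkeeping argument on top of this identity, the work is more algebraic rearrangement than deep analysis. First I would take the hypothesis in Eqn.~(\ref{eq:kl_constraint_app}) and substitute the exact expression from Eqn.~(\ref{eq:kl_split}), giving for every $\bm{s}$ the inequality
\begin{equation}
\frac{1}{2}\cdot\frac{\|\Delta\bm{\mu}(\bm{s})\|_2^2}{\sigma_\beta(\bm{s})^2}
+\frac{1}{2}c\!\left(\sigma_\beta(\bm{s})\right)\le \epsilon.
\end{equation}
Rearranging to isolate the mean-distance term and multiplying through by $2$ yields Eqn.~(\ref{eq:mean_bound_state}) directly. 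This first claim requires no assumptions beyond those giving the closed form, so it falls out immediately.

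For the second claim, the idea is to remove the state-dependence of the right-hand bound by replacing $c(\sigma_\beta(\bm{s}))$ with its worst-case value. Here Assumption~\ref{ass:bounded_sigma} is essential: because $\sigma_\beta(\bm{s})\in[\underline{\sigma},\overline{\sigma}]$ with $0<\underline{\sigma}\le\overline{\sigma}<\infty$, the function $c$ is continuous on a compact interval and hence attains a finite maximum $c_{\max}$, justifying Eqn.~(\ref{eq:cmax}). Since $c(\sigma_\beta(\bm{s}))\le c_{\max}$ pointwise, Eqn.~(\ref{eq:mean_bound_state}) gives $\tfrac{\|\Delta\bm{\mu}(\bm{s})\|_2^2}{\sigma_\beta(\bm{s})^2}\le 2\epsilon - c(\sigma_\beta(\bm{s}))$, but to reach the uniform form I would instead lower-bound the right side: the quantity $2\epsilon - c(\sigma_\beta(\bm{s}))$ is at least $2\epsilon - c_{\max}=\kappa$. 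Multiplying both sides by $\sigma_\beta(\bm{s})^2$ then produces Eqn.~(\ref{eq:mean_bound_final}).

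The main obstacle, and the only step requiring care, is the sign and positivity bookkeeping around $\kappa$. The bound $\|\Delta\bm{\mu}(\bm{s})\|_2^2\le\kappa\,\sigma_\beta(\bm{s})^2$ is only meaningful (and the constraint feasible) when $\kappa=2\epsilon-c_{\max}\ge 0$, i.e. when the allowed KL budget $\epsilon$ is large enough to absorb the worst-case variance mismatch. I would note that if $\epsilon$ is too small relative to $c_{\max}$ the feasible set forces $\Delta\bm{\mu}(\bm{s})=\bm{0}$ (and indeed the per-state bound in Eqn.~(\ref{eq:mean_bound_state}) can become vacuous or empty), so the clean statement implicitly assumes $\epsilon$ is chosen so that $\kappa>0$. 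I would also briefly record why the replacement step is valid in the correct direction: we are weakening the per-state bound to a uniform one, which is legitimate because we lower-bound the right-hand side by $\kappa$, preserving the inequality. Beyond this, the argument is a direct substitution followed by a compactness-based maximization, so no further technical difficulty arises.
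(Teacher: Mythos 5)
Your first claim and its derivation are correct and follow the paper's own route exactly: substitute the closed-form decomposition Eqn.~(\ref{eq:kl_split}) into the constraint Eqn.~(\ref{eq:kl_constraint_app}) and rearrange to obtain Eqn.~(\ref{eq:mean_bound_state}). No issue there.

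The second step, however, contains a genuine logical error, and it is worth noting that the paper's own proof commits exactly the same one, so you have faithfully reproduced a flaw rather than introduced a new one. From
\begin{equation*}
\frac{\|\Delta\bm{\mu}(\bm{s})\|_2^2}{\sigma_\beta(\bm{s})^2} \le 2\epsilon - c\!\left(\sigma_\beta(\bm{s})\right)
\qquad\text{and}\qquad
2\epsilon - c\!\left(\sigma_\beta(\bm{s})\right) \ge 2\epsilon - c_{\max} = \kappa,
\end{equation*}
one cannot conclude $\frac{\|\Delta\bm{\mu}(\bm{s})\|_2^2}{\sigma_\beta(\bm{s})^2} \le \kappa$: knowing $A \le B$ and $B \ge \kappa$ says nothing about the relation between $A$ and $\kappa$. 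Your justification --- ``we lower-bound the right-hand side by $\kappa$, preserving the inequality'' --- has the direction backwards: to weaken $A \le B$ into $A \le \kappa$ you need an \emph{upper} bound $B \le \kappa$, i.e.\ you would need $c(\sigma_\beta(\bm{s})) \ge c_{\max}$, which holds only where the maximum is attained. A concrete counterexample: take $d=1$, $\sigma_\pi = 1$, $[\underline{\sigma},\overline{\sigma}]=[0.5,\,1]$, $\epsilon = 1$, and a state with $\sigma_\beta(\bm{s})=1$ and $\|\Delta\bm{\mu}(\bm{s})\|_2^2 = 2$. Then $c(\sigma_\beta(\bm{s})) = 0$, so the KL divergence equals $1 \le \epsilon$ and the hypothesis holds; yet $c$ is decreasing on $(0,1)$, so $c_{\max} = c(0.5) = 3+\ln(1/4) \approx 1.61$, giving $\kappa \approx 0.39$, and Eqn.~(\ref{eq:mean_bound_final}) fails since $2 > 0.39$. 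The repair costs nothing structurally: replace $c_{\max}$ by $c_{\min} \triangleq \min_{\sigma\in[\underline{\sigma},\overline{\sigma}]} c(\sigma)$ and set $\kappa \triangleq 2\epsilon - c_{\min}$, or, even more simply, invoke the fact recorded after Eqn.~(\ref{eq:c_sigma}) that $c(\sigma)\ge 0$ to get the uniform bound $\|\Delta\bm{\mu}(\bm{s})\|_2^2 \le 2\epsilon\,\sigma_\beta(\bm{s})^2$, i.e.\ $\kappa = 2\epsilon$. Your feasibility remark (that the statement is only meaningful when $\kappa \ge 0$) is sensible but does not rescue the step.
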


\begin{proof}
By Eqn.~(\ref{eq:kl_split}), the KL constraint in Eqn.~(\ref{eq:kl_constraint_app}) implies
\(
\frac{1}{2}\frac{\|\Delta\bm{\mu}(\bm{s})\|_2^2}{\sigma_\beta(\bm{s})^2}
+\frac{1}{2}c(\sigma_\beta(\bm{s}))\le \epsilon
\)
for all $\bm{s}$, which rearranges to Eqn.~(\ref{eq:mean_bound_state}).
Under Assumption~\ref{ass:bounded_sigma}, $c(\sigma_\beta(\bm{s}))\le c_{\max}$, hence
\(2\epsilon-c(\sigma_\beta(\bm{s}))\ge 2\epsilon-c_{\max}=\kappa\).
Multiplying Eqn.~(\ref{eq:mean_bound_state}) by $\sigma_\beta(\bm{s})^2$ gives Eqn.~(\ref{eq:mean_bound_final}).
\end{proof}

\begin{figure}[tb]
    \centering
    \subfloat{\includegraphics[width=0.8\columnwidth]{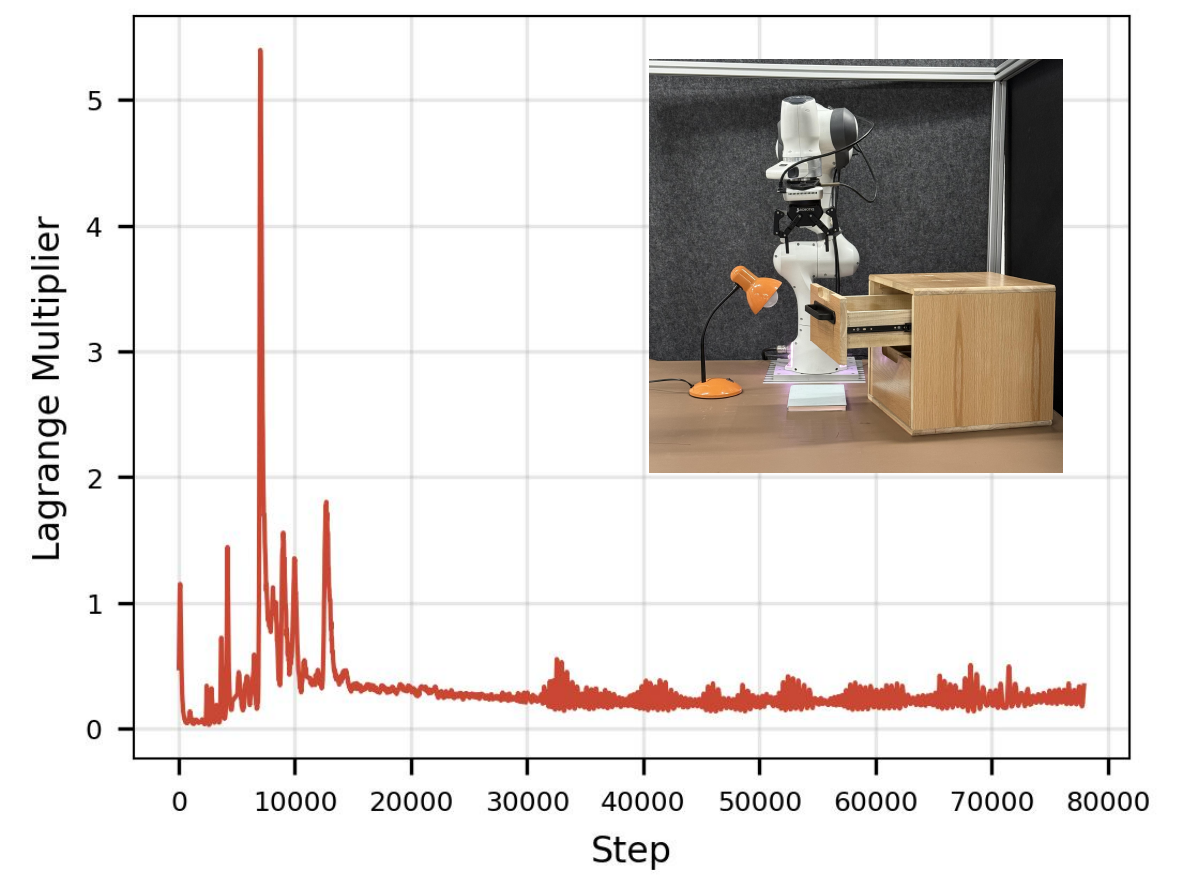}}
    \caption{\textbf{Lagrange Multiplier Training Curves in Open Cabinet}}
    \label{fig:multiplier_open_cabinet}
\end{figure}

\begin{figure}[tb]
    \centering
    \subfloat{\includegraphics[width=0.8\columnwidth]{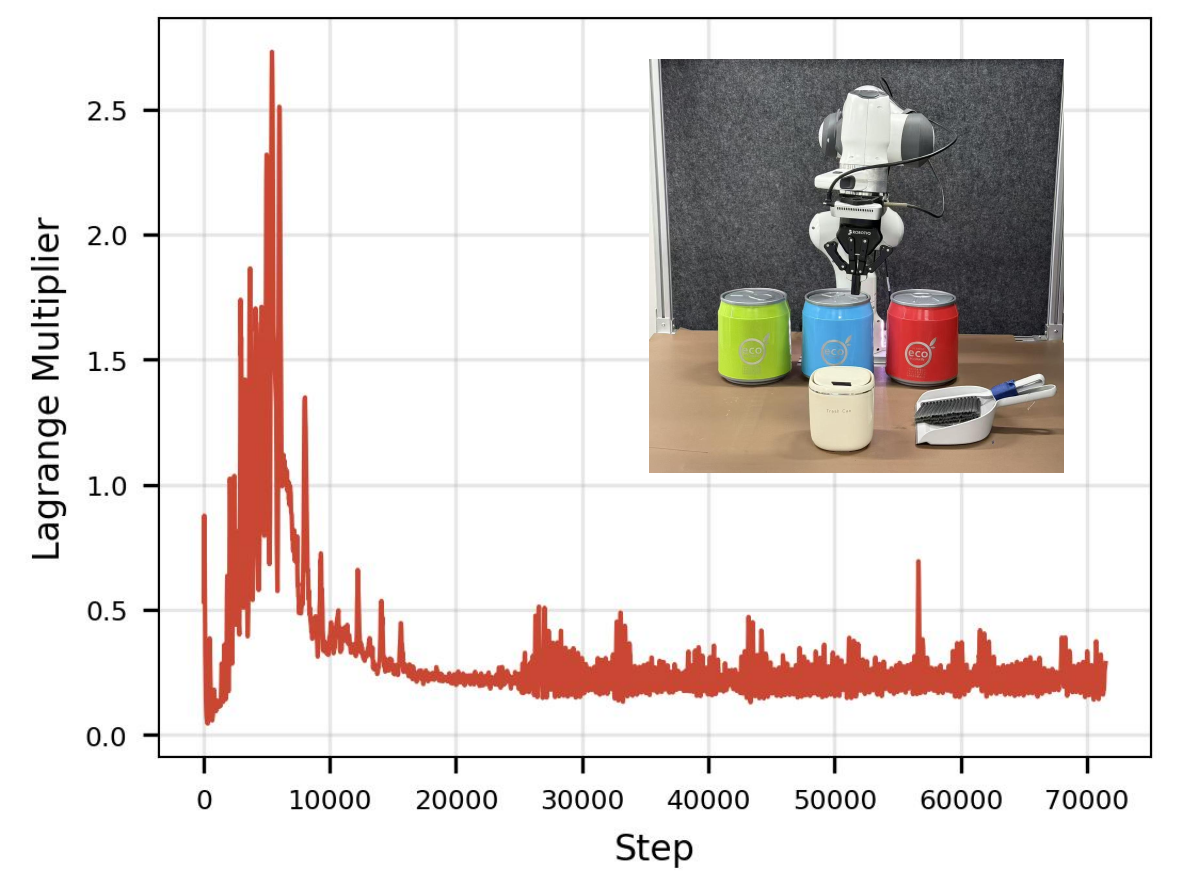}}
    \caption{\textbf{Lagrange Multiplier Training Curves in Close Trashbin}}
    \label{fig:multiplier_close_trashbin}
\end{figure}

\begin{figure*}[tb]
    \centering
    \subfloat{\includegraphics[width=0.7\textwidth]{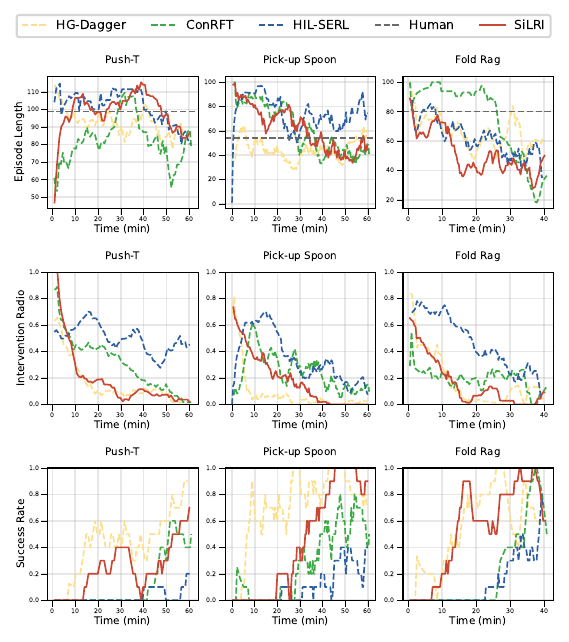}}
    \caption{\textbf{Training Curves of Episode Length, Intervention Ratio, and Success Rate.} We train four online methods on three different tasks. To ensure consistency, all online training is conducted by the same human operator.}
    \label{fig:supp_training_ratio}
\end{figure*}

\section{Intervention Standard Operating Procedure}\label{supp:sop}
During our real-world experiments, we observed that the timing of human interventions significantly affects convergence performance, particularly for ConRFT and HIL-SERL. This is primarily due to the mismatch between slow physical execution and a large exploration space, which limits the robot's ability to explore efficiently.

In some high-uncertainty states, even with human intervention, the robot must still explore alternative actions and their outcomes. In such cases, allowing the robot to explore independently (constrained by safety boundaries) is often more effective than frequent human guidance.

To address this challenge and ensure fair comparisons across methods, we developed a standardized Intervention Standard Operating Procedure (SOP) as follows:

\begin{enumerate}
    \item \textbf{Initial full interventions.} Fully intervene for the first 3-5 episodes, completing the task each time to provide stable initial guidance.

    \item \textbf{No-intervention exploration after severe deviation.} Once a \textbf{severe deviation} is observed (e.g., when the robot should move toward the trash bin but the end-effector instead explores in an unrelated direction), \textbf{do not intervene}. Let the robot explore until the episode \textbf{times out}.

    \item \textbf{Early intervention after repeated severe deviations.} If step (2) is repeated and a severe deviation is observed \textbf{twice} at the \textbf{same location/state}, then in the next episode, \textbf{intervene earlier} at the earliest point where the deviation first occurs.

    \item \textbf{Fallback to full demonstration after repeated failures.} If the robot fails to complete the task for \textbf{five} consecutive episodes under the above protocol, then in the next episode, \textbf{fully intervene from the beginning} and complete the task, providing a complete human demonstration.
\end{enumerate}

While it is impractical for human operators to strictly follow the SOP during execution, it still serves as a valuable guideline for efficient online training.

\end{document}